\documentclass{article}

\usepackage{amssymb,amsmath,amsthm,mathabx}
\renewcommand{\phi}{\varphi}
\newtheorem{claim}{Claim}
\usepackage{times}  
\usepackage{helvet}  
\usepackage{courier}  
\usepackage{url}  
\usepackage{graphicx}  
\newtheorem{definition}{Definition}
\newtheorem{theorem}{Theorem}
\newtheorem{lemma}{Lemma}
\renewcommand{\phi}{\varphi}
\renewcommand{\epsilon}{\varepsilon}

\begin{document}

\title{Epistemic Navigability}
\title{Epistemic Strategies for Navigability}
\title{Navigating with Imperfect Knowledge}
\title{Navigating with Imperfect Information}
\title{Navigation with Imperfect Information}
\title{Navigability with Imperfect Information}

\author{Kaya Deuser and Pavel Naumov\\
Vassar College\\
Poughkeepsie, New York}
\date{}

\maketitle

\begin{abstract}

The article studies navigability of an autonomous agent in a maze where some rooms may be indistinguishable. In a previous work the authors have shown that the properties of navigability in such a setting depend on whether an agent has perfect recall. Navigability by an agent with perfect recall is a transitive relation and without is not transitive. 

This article introduces a notion of restricted navigability and shows that a certain form of transitivity holds for restricted navigability, even for an agent without perfect recall. The main technical result is a sound and complete logical system describing the properties of restricted navigability.
\end{abstract}

\maketitle

\section{Introduction}

Autonomous agents such as  self-navigating missiles, self-driving cars, and robotic vacuum cleaners are often facing the challenge of navigating under conditions of uncertainty about their exact location. A solution to such a problem can be formally described as a sequence of instructions that transition a system from one state to another, assuming that the agent cannot distinguish some of the states. We refer to such systems as {\em epistemic transition systems}.

\begin{figure}[ht]
\begin{center}
\vspace{0mm}
\scalebox{.7}{\includegraphics{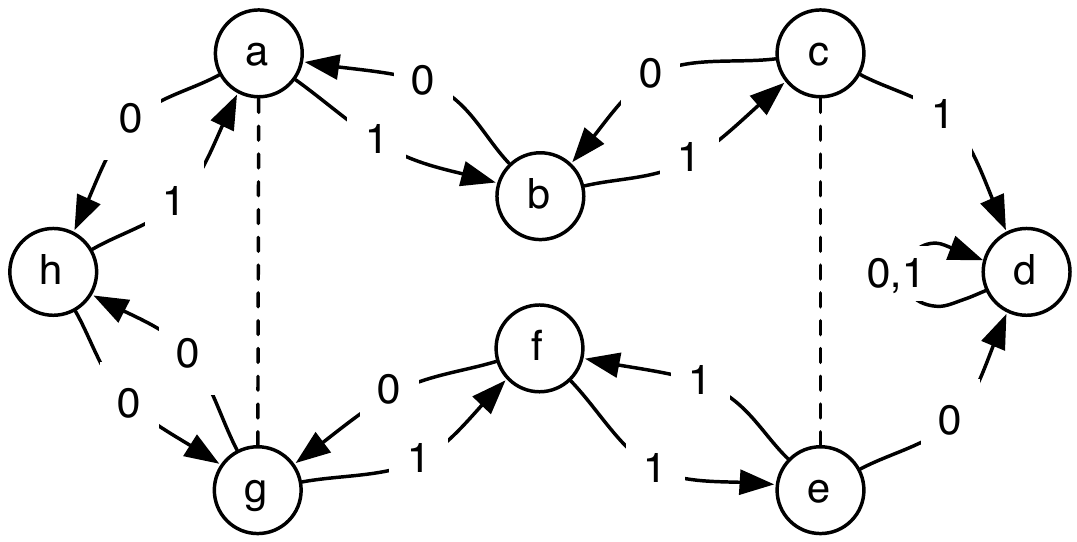}}
\vspace{0mm}
\caption{Epistemic transition system $T_0$.}\label{intro-example figure}
\vspace{0mm}
\end{center}
\vspace{0mm}
\end{figure}
Figure~\ref{intro-example figure} depicts an example of an epistemic transition system $T_0$. This system consists of eight states $a$ through $h$. The agent cannot distinguish state $a$ from state $g$ and state $c$ from state $e$, which is shown in the figure by dashed lines connecting indistinguishable states. The directed arrows in the figure represent the possible transitions that the system can take. The labels on the arrows specify the instructions that the agent needs to use to accomplish this transition. For example, the agent can use instruction 1 to transition the system from state $a$ to state $b$. 

\subsection{Navigability}

The agent can combine multiple instructions into a {\em strategy} to navigate between states that are not directly connected. For example, the agent can apply a strategy that uses instruction 1 repeatedly to navigate from state $h$ to state $d$.

\subsubsection{Amnesic vs Recall Strategies}

We assume that the strategy is ``hardwired" into the agent's read-only memory and cannot be changed once the navigation starts. It is crucial for our discussion to distinguish agents that, in addition to read-only memory, are also equipped with read-write memory. A strategy of the former agent can make the decisions which instruction to use based only on the available information about the current location. A strategy of the latter agent can make such decisions based on the logs of the states the agent previously visited and the instructions the agents used in the past.

An example of these two type of agents are Roomba and Neato, two popular brands of robotic vacuum cleaners. Although Roomba has read-write memory to keep a cleaning schedule, it does not use this memory for navigation. As a result, its behavior is completely determined by the information about the machine's current location: it changes the direction if it hits a wall, it spins if it encounters a dirty spot, etc. On the other hand, Neato scans the room before cleaning and uses this information to navigate. Thus, its strategy is based not only on the information about the current location, but also on the previously obtained information stored in read-write memory.

An agent that has no read-write memory can only use the  available information about the current location while deciding which strategy to use. Any strategy of this agent must use the same instruction in all indistinguishable states. Such a strategy can be formally defined as a function from classes of indistinguishable states into instructions. In this article we refer to such strategies as {\em amnesic} strategies. An example of an amnesic strategy is the algorithm hardwired into Roomba vacuum cleaners by the manufacturer. An ideal agent that has an unlimited size of read-write memory can keep logs of all the states the agent previously visited and all the instructions the agent used in the past. Such an agent is usually called an agent with {\em perfect recall}. In this article we use the term {\em recall strategy} to describe any strategy that potentially can be employed by an agent with perfect recall. Formally, recall strategy is a function that maps indistinguishability classes of {\em histories} into instructions. Although Neato only has a finite memory, the strategy that it uses is a recall strategy.

\begin{figure}[ht]
\begin{center}
\vspace{0mm}
\scalebox{.7}{\includegraphics{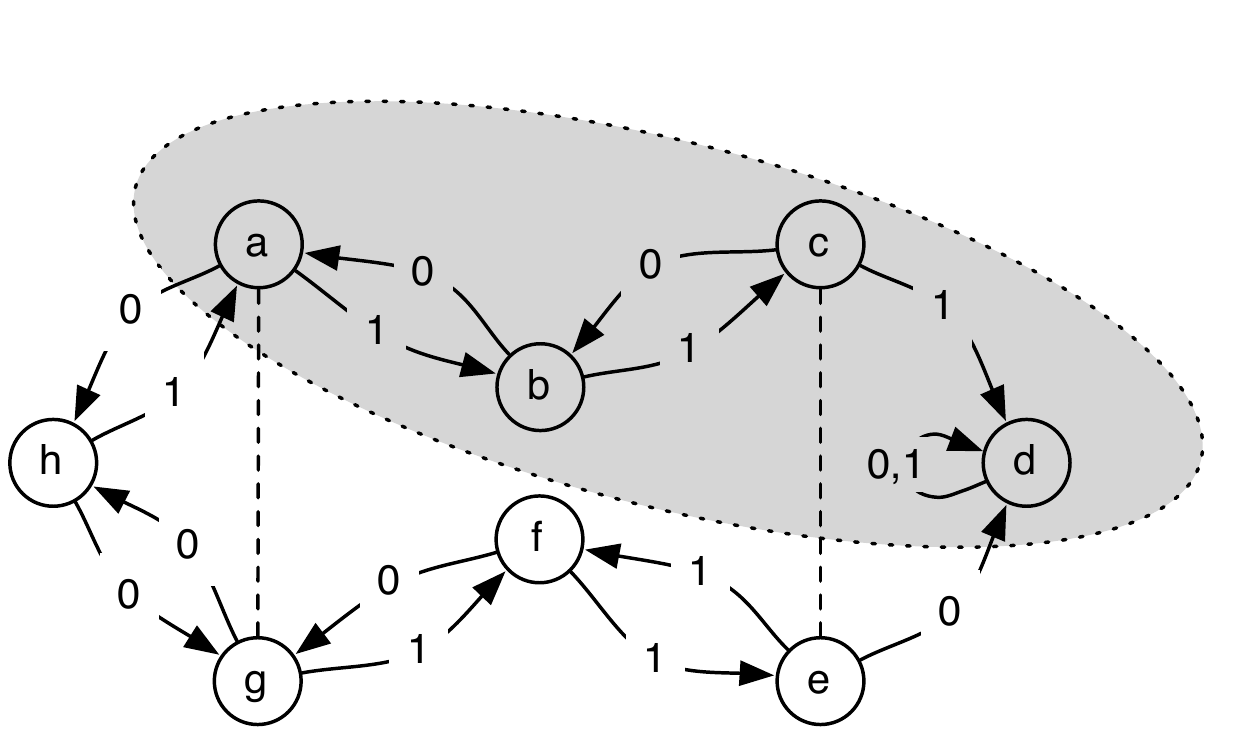}}
\vspace{0mm}
\caption{``Locked" area in epistemic transition system $T_0$.}\label{intro-example-locked figure}
\vspace{0mm}
\end{center}
\vspace{0mm}
\end{figure}
In epistemic transition system $T_0$ depicted in Figure~\ref{intro-example figure}, there is no amnesic strategy to navigate from state $b$ to state $f$. Indeed, any such strategy would have to use the same instruction $i_0$ in indistinguishable states $a$ and $g$. If $i_0=1$, then the agent has to use instruction $1$ in state $a$. Thus, when navigating from state $b$ the system is ``locked" among states $b,a,c$, and $d$, unable to reach states $h,g,f$, and $e$, see  Figure~\ref{intro-example-locked figure}. Similarly, if $i_0=0$, then the agent has to use instruction $0$ in state $g$. Thus, when navigating from state $b$ the system is locked among states $b,a,h,g,c$, and $d$, unable to reach states $e$ and $f$.

However, there is a recall strategy to navigate from state $b$ to state $f$ in system $T_0$. An  example of such a strategy is a strategy that uses instruction $0$ until it visits the class of indistinguishable states $[a]=\{a,g\}$ for {\em the second} time. After that it switches to instruction 1.

\subsubsection{Strategies between Classes}

In epistemic transition system $T_0$, there is a strategy to navigate from state $c$ to state $a$ by using instruction $0$ all the time. However, if the agent is deployed in state $c$, then she does not know that such a strategy exists because she cannot distinguish state $c$ from state $e$, from which instruction $0$ would lock her in state $d$. Additionally, if the agent does not have perfect recall, then even if she decides to use such a strategy to reach state $a$, she would not be able to verify that the goal is accomplished because she cannot distinguish state $a$ from state $g$. For these reasons, in this article we consider navigability not between individual states, but between classes of indistinguishable states. For example, a strategy that uses instruction 1 in all states is an amnesic strategy to navigate from class $[a]=\{a,g\}$ to class $[c]=\{c,e\}$. Indeed, starting from state $a$ such strategy leads to state $c$ and starting from state $g$ this strategy leads to state $e$. 

At the same time, not only is there no amnesic, but there is also no recall strategy to navigate from class $[c]$ to class $[a]$. Indeed, consider two situations when the system {\em starts} in (i) state $c$ and (ii) state $e$. The histories of the transitions in these two cases consist only of states $c$ and $e$ respectively. Thus, these two {\em histories} are indistinguishable and a recall strategy would have to use the same instruction $i_0$ in both settings. If $i_0=1$, then from state $c$ the system transitions into state $d$ and remains locked there. If $i_0=0$, then from state $e$ the system also transitions into state $d$ and remains locked there. In either of these two cases the system is not able to reach a state in class $[a]$.

As one would expect, there are situations when there is a recall strategy, but no amnesic strategy to navigate between two classes. For example, there is a recall strategy to navigate from class $[a]$ to class $[b]$. Perhaps unexpectedly, this strategy always uses instruction 0 on the first transition. Then, it switches to repeatedly using instruction 1. Starting from either state $a$ or state $g$, this strategy first transitions the system into state $g$, then into state $a$, then into state $b$. Thus, it is a recall strategy to navigate from class $[a]$ to class $[b]$. To show that there is no amnesic strategy to navigate from class $[a]$ to class $[b]$, note that any such strategy would have to use the same instruction $i_0$ in state $a$ and state $g$. If $i_0=0$ and the navigation starts from state $a$, then the system is locked among states $a,h$, and $g$ and never reaches states $b,c,d,e$, and $f$. If $i_0=1$ and the navigation starts from state $g$, then the system is locked among states $g,f,e$, and $d$ and never reaches states $h,a,b$, and $c$.

\begin{table}[ht]
\centering
\begin{tabular}{ c | c c c c c c}
            & $[a]$ & $[b]$ & $[c]$ & $[d]$ & $[f]$ & $[h]$\\ \hline
$[a]$       & a     & r     & a     & r     & r     & a \\
$[b]$       & a     & a     & a     & a     & r     & a \\
$[c]$       & -     & -     & a     & r     & -     & - \\
$[d]$       & -     & -     & -     & a     & -     & -\\
$[f]$       & a     & r     & a     & a     & a     & a\\
$[h]$       & a     & a     & a     & a     & a     & a
\end{tabular}
\caption{Navigability between classes in system $T_0$.}\label{s r table}
\end{table}

Table~\ref{s r table} shows what kind of strategies exist between classes of states in the epistemic transition system $T_0$. Letter ``a" at the intersection of row $x$ and column $y$ marks the cases when there is an amnesic strategy from class $x$ to class $y$. Letter ``r" denotes the cases where there are recall but no amnesic strategies. Dash ``-" marks the cases where neither amnesic nor recall strategies exist. 

An interesting observation about Table~\ref{s r table} is that there is a recall but no amnesic strategy to navigate from class $[c]$ to class $[d]$. An example of such a recall strategy is a strategy that first uses instruction 0 and then repeatedly uses instruction 1. If this recall strategy is used starting from state $e$, then the system transitions directly into state $d$. If the system starts from state $c$, then the system first transitions to state $b$ using instruction 0, then to back to state $c$ using instruction 1, and finally from state $c$ to state $d$ this time using instruction 1. 

We now show that there is no amnesic strategy $s$ to navigate from class $[c]$ to class $[d]$. Suppose the opposite. By $s[a]$ and $s[c]$ we denote the instructions used by strategy $s$ in all states of classes $[a]$ and $[c]$ respectively. Since both $s[a]$ and $s[c]$ can have either value 0 or value 1, there are four cases to consider. If $s[a]=s[c]=0$, then strategy $s$ can be used to navigate from state $e$ to state $d$. However, if the navigation starts in state $c$, then the system is locked among state  $c$, $b$, $a$, $h$, and $g$ and it never reaches states $f$, $e$, and $d$.  If $s[a]=1$ and $s[c]=0$, then when the system starts from state $c$ it is locked among states $c$, $b$, and $a$. The other two cases are similar.

\subsubsection{Strategies between Sets of Classes}

As we have seen above, there is a recall strategy, but no amnesic strategy to navigate from class  $[a]$ to class $[b]$. One can similarly show that there is a recall strategy, but no amnesic strategy to navigate from class  $[a]$ to class $[f]$. However, if the goal is to navigate from class $[a]$ to either class $[b]$ or class $[f]$, then there is an amnesic strategy to achieve this. An example of such a strategy is the strategy that uses instruction 1 in all states. This strategy directly transitions the system from any state in class $[a]$ to a state in either class $[b]$ or class $[f]$. This example shows that navigability to a {\em set of classes} cannot be reduced to navigability to {\em classes}.

\begin{figure}[ht]
\begin{center}
\vspace{0mm}
\scalebox{.7}{\includegraphics{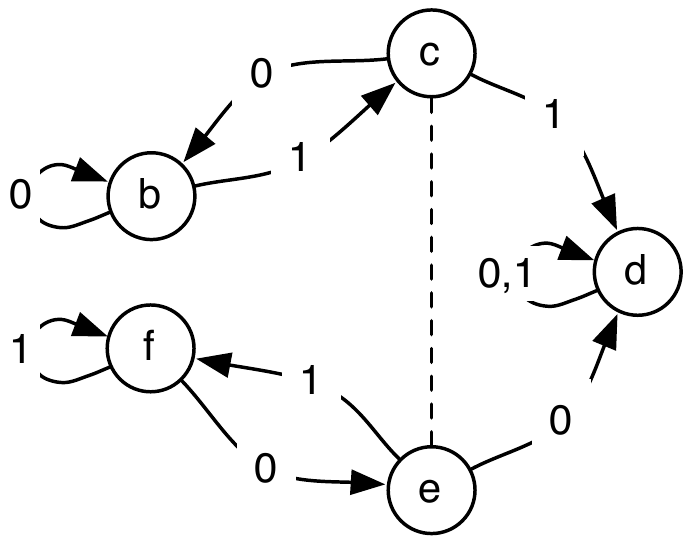}}
\vspace{0mm}
\caption{Epistemic transition system $T_1$.}\label{intro-example-rhs figure}
\vspace{0mm}
\end{center}
\vspace{0mm}
\end{figure}

Our next example shows that navigability {\em from} a sets of classes cannot be reduced to navigability {\em from} classes. Consider epistemic transition system $T_1$ depicted in Figure~\ref{intro-example-rhs figure}. Amnesic strategy that always invokes instruction 1 can be used to transition this system from class $[b]$ to class $[d]$. At the same time, amnesic strategy that always invokes instruction 0 can be used to transition this system from class $[f]$ to class $[d]$. However, there is no single amnesic strategy that would transition the system from an arbitrary state of an arbitrary class of set $\{[b],[f]\}$ to a state in a class of set $\{[d]\}$. Indeed, any such strategy would have to use the same instruction $i_0$ in indistinguishable states $c$ and $e$. If $i_0=0$, then from state $b$ the system cannot reach state $d$ because it is locked among states $b$ and $c$. If $i_0=1$, then from state $e$ the system cannot reach state $d$ because it is locked among states $f$ and $e$. The same example can be made using classes $[b]$, $[f]$, and $[d]$ in the original epistemic transition system $T_0$, but the proof that an amnesic strategy does not exist is more involved. 

The above two examples, show that navigability between sets of classes cannot be expressed in terms of navigability between classes. In this article, we study this more general notion of navigability between sets of classes. If $A$ and $B$ are two sets of classes, then we use notation $A\rhd B$ to denote the existence of a strategy to navigate from set $A$ to set $B$. Whether we mean an amnesic or a recall strategy will be clear from the context.

\subsubsection{Universal properties}

So far, we have discussed properties of navigability specific to epistemic transition system $T_0$. In this article we study properties of navigability that are universally true in all epistemic transition systems. Two examples of such properties are
\begin{enumerate}
\item Reflexivity: $A\rhd B$, where $A\subseteq B$, and
\item Augmentation: $A\rhd B\to (A\cup C)\rhd (B\cup C)$.
\end{enumerate}
The first of these properties claims that there is always a navigation strategy to navigate from a set to a superset of the set. Any strategy can be used to do this because the goal is achieved before the navigation even starts. The second property says that if there is a strategy to navigate from set $A$ to set $B$, then there is a strategy to navigate from set $A\cup C$ to set $B\cup C$. This property is true because {\em the very same strategy} that is used to navigate from set $A$ to set $B$ can {\em also} be used to navigate from set $A\cup C$ to set $B\cup C$. Both of the above properties are universally true for amnesic as well as for recall strategies. An example of a property universally true for recall strategies but not universally true for amnesic strategies is 
\begin{enumerate}
\item[3.] Transitivity: $A\rhd B\to (B\rhd C\to A\rhd C)$.
\end{enumerate}
This property says that if there is a navigation strategy from set $A$ to set $B$ and another navigation strategy from set $B$ to set $C$, then there is a navigation strategy from set $A$ to set $C$. To see that this property is not universally true for amnesic strategies, notice that in epistemic transition system $T_0$ the agent can transition the system from set $\{[a]\}$ to set $\{[h]\}$ using the amnesic strategy that always invokes instruction 0. She can also transition the system from set $\{[h]\}$ to set $\{[b]\}$ using the amnesic strategy that always invokes instruction 1. Yet, as we have seen earlier, there is no amnesic strategy to navigate from set $\{[a]\}$ to set $\{[b]\}$. In our previous work~\cite{dn17arxiv-armstrong}, we have shown that the Transitivity principle is universally true for recall strategies. In the same paper we have also shown that the Reflexivity, the Augmentation, and the Transitivity principles give a complete axiomatization of all universal properties of recall strategies. These three axioms are known in the database literature as Armstrong axioms~\cite[p.~81]{guw09}. They give a sound and complete axiomatization of the properties of functional dependency relation in databases~\cite{a74}. 

As we have seen, the Transitivity principle is not universally true for amnesic strategies. In~\cite{dn17arxiv-armstrong} we gave a complete axiomatization of all properties of amnesic strategies. This axiomatization, in addition to the Reflexivity principle and the Augmentation principle, also contains 
\begin{enumerate}
\item[4.] Left Monotonicity: $A'\rhd B\to A\rhd B$, where $A\subseteq A'$, and
\item[5.] Right Monotonicity: $A\rhd B\to A\rhd B'$, where $B\subseteq B'$.
\end{enumerate}
It can be shown that the Left and the Right Monotonicity principles are provable from Armstrong axioms.
Thus, the single principle that distinguishes universal properties of recall strategies from the universal properties of amnesic strategies is Armstrong's Transitivity axiom. Perhaps unexpectedly, this single principle captures the additional properties of navigability associated with the assumption of perfect recall by the agent.

\subsubsection{Can Transitivity be Saved?}

Transitivity is an intuitively expected property for navigability to have. To us, the fact that navigability under amnesic strategies does not have this property is surprising and counter-intuitive. In the current article we answer the question if there is some form of transitivity that still can be claimed without the assumption of total recall. 

Our answer to this question is that a form of transitivity is true for amnesic strategies if, instead of the navigability relation, one considers the {\em restricted} navigability.

\subsection{Restricted Navigability}

As we have seen earlier, there is an amnesic strategy to navigate from class $[a]$ to class $[c]$ in epistemic transition system $T_0$ depicted in Figure~\ref{intro-example figure}. An example of such a strategy is an amnesic strategy that uses instruction $1$ in each state. Imagine now that the system is restricted from transitioning through class $[f]$. There is still a strategy to navigate from class $[a]$ to class $[c]$ under this restriction, but transitioning from state $g$ to class $[c]$ will now have to go through states $h$, $a$, and $b$. Any strategy that navigates from class $[a]$ to class $[c]$ and avoids class $[f]$ would need to be a recall strategy because it must use instruction 1 in state $a$ and instruction 0 in state $g$. 

In this article we study the universal properties of the restricted navigability relation $A\rhd_B C$, where $A$, $B$, and $C$ are sets of classes. Informally, $A\rhd_B C$ means that there is a strategy to navigate from set $A$ to set $C$ while staying in set $B$. Formal semantics of relation $A\rhd_B C$ is given in Definition~\ref{sat}. For now let us just note that the restriction ``staying in set $B$"  does not apply to destination states. That is, the final point of a navigation path can be outside of set $B$. This is a technical detail that makes the axioms of our logical system more elegant. One can think about the subscript as the set where the agent actually uses  the strategy to determine the instruction. Once the set  $C$ is reached, no further instructions are needed. 

The following form of the Transitivity principle is valid for the restricted navigability under the amnesic strategies:
\begin{equation}\label{new transitivity}
    A\rhd_B C \to (C\rhd_D  E \to A \rhd_{B\cup D} E), \mbox{ where $B\cap D=\varnothing$.}
\end{equation}
We prove the soundness of this principle in Lemma~\ref{transitivity sound}.

\subsubsection{Universal Principles}

In this article we give a sound and complete axiomatization of the universal properties of relation $A\rhd_B C$. This axiomatization contains a modified version of Armstrong's Reflexivity principle:
\begin{equation}\label{new reflexivity}
    A\rhd_B C, \mbox{ where $A\subseteq C$},
\end{equation}
a modified version of Armstrong's Augmentation principle:
\begin{equation}\label{new augmentation}
    A\rhd_B C \to (A\cup D)\rhd_B (C\cup D),
\end{equation}
and the modified version~(\ref{new transitivity}) of Armstrong's Transitivity principle. Note that transitivity principle~(\ref{new transitivity}) can be stated in a more general form where the restriction $B\cap D=\varnothing$ is replaced with $B\cap D\subseteq C$. We prove this seemingly more general form of transitivity from principle~(\ref{new transitivity}) and the rest of the axioms of our logical system in Lemma~\ref{super transitivity}.

In addition to the modified versions of three Armstrong's axioms described above, our logical system contains three more axioms. The first of these axioms is 
\begin{equation}\label{early bird}
    A\rhd_B C \to A\rhd_{B\setminus C} C.
\end{equation}
This axiom says that if there is a strategy to navigate from set $A$ to set $C$ passing only through set $B$, then there is a strategy to navigate from set $A$ to set $C$ passing only through set $B\setminus C$. Indeed, once a state of a class of the set $C$ is reached for the first time, there is no need to continue the execution of the strategy. Thus, the navigation path will never have to pass through set $B\cap C$. In essence, the axiom states that the execution of any strategy can be terminated once the desired goal is reached for the first time. For this reason we refer to principle~(\ref{early bird}) as the Early Bird principle.

Before we state the remaining two axioms of our logical system, we need to discuss the meaning of the statement $A\rhd_\varnothing\varnothing$. It claims that there is a strategy to navigate from each state in each class of set $A$ to a state in an empty set of classes. This is only possible if all classes in set $A$ are empty.  Although in mathematics it is common to consider only nonempty equivalence classes, in this article we allow for empty classes as well. For example, when navigating in a maze one might consider a strategy that  takes a left door in all red rooms, even if the class of red rooms is empty. In fact, we allow multiple empty classes: an empty class of red rooms, an empty class of blue rooms, etc. In different empty classes the strategy might use different instructions. The technical details of our semantics are described in Section~\ref{semantics section}. For now let us just point out that statement $A\rhd_\varnothing\varnothing$ means that all classes in set $A$ are empty.

We are now ready to state the two remaining axioms of our logical system. One of them is
\begin{equation}\label{set minus axiom}
    A\rhd_\varnothing B \to A\setminus B\rhd_\varnothing\varnothing.
\end{equation}
It states that if there is a strategy to navigate from each state in each class in set $A$ to a state in a class of set $B$ {\em through an empty set of classes}, then all starting states in classes from set $A$ must already be in classes of set $B$. In other words, classes in set $A\setminus B$ must be empty.  We call this the Trivial Path principle.

The last axiom of our logical system is
\begin{equation}\label{empty set axiom}
    A\rhd_B \varnothing  \to A\rhd_\varnothing\varnothing.
\end{equation}
It states that if there is a strategy to navigate from each state in each class in set $A$ to a state in a class of an empty set, then classes in set $A$ must be empty. We call this the Path to Nowhere principle.

Our main results are the soundness and completeness theorems for the logical system consisting of axioms  (\ref{new transitivity}), (\ref{new reflexivity}), (\ref{new augmentation}), (\ref{early bird}), (\ref{set minus axiom}), and (\ref{empty set axiom}). 

\subsection{Literature Review}

Most of the existing literature on logical systems for reasoning about strategies is focused on modal logics for coalition strategies. Logics of coalition power were proposed by Marc Pauly~\cite{p01illc,p02}, who also proved the completeness of the basic logic of coalition power. Pauly's approach has been widely studied in literature~\cite{g01tark,vw05ai,b07ijcai,sgvw06aamas,abvs10jal,avw09ai,b14sr}. An alternative, binary-modality-based, logical system  was proposed by More and Naumov~\cite{mn12tocl}. 

Alur, Henzinger, and Kupferman introduced the Alternating-Time Temporal Logic (ATL) that combines temporal and coalition modalities~\cite{ahk02}.
Van der Hoek and Wool\-dridge proposed to combine ATL with epistemic modality to form Alternating-Time Temporal Epistemic Logic~\cite{vw03sl}. However, they did not prove the completeness theorem for the proposed logical system. 
A completeness theorem for a logical system that combines coalition power and epistemic modalities was proven by {\AA}gotnes and Alechina~\cite{aa12aamas}. 

The notion of a strategy that we consider in this article is much more restrictive than the the notion of strategy in the works mentioned above. Namely, we assume that the strategy must be based only on the information available to the agent. This is captured in our setting by requiring the strategy to be the same in all indistinguishable states or all indistinguishable histories.  This restriction on strategies has been studied before under different names. Jamroga and {\AA}gotnes talk about ``knowledge to identify and execute a strategy"~\cite{ja07jancl},  Jamroga and van der Hoek discuss ``difference between an agent knowing that he has a suitable strategy and knowing the strategy itself"~\cite{jv04fm}. Van Benthem calls such strategies ``uniform"~\cite{v01ber}. Naumov and Tao~\cite{nt17aamas} used the term ``executable strategy". 
{\AA}gotnes and Alechina gave a complete axiomatization of an interplay between single-agent knowledge and know-how modalities~\cite{aa16jlc}. Naumov and Tao~\cite{nt17aamas} axiomatized interplay between distributed knowledge modality and know-how coalition strategies for enforcing a condition indefinitely. A similar complete logical system in a {\em single-agent} setting for know-how strategies to {achieve} a goal in multiple steps rather than to {maintain} a goal is developed by Fervari, Herzig, Li, Wang~\cite{fhlw17ijcai}. Naumov and Tao proposed a complete trimodal logical system that describes an interplay between distributed knowledge, coalition power, and know-how coalition power modalities for goals achievable in one steps~\cite{nt17tark}. All of the above works do not consider a perfect recall setting. Modal logic that combines distributed knowledge with coalition power in a perfect recall setting has been recently proposed by Naumov and Tao~\cite{nt17arxiv-perfect}. Unlike this article, they only consider goals achievable in one step.


Our work is closely connected to Wang's logic of knowing how~\cite{w15lori,w17synthese} and especially to Li and Wang's logic of knowing how with intermediate constraints~\cite{lw17icla}. The latter logical system describes the properties of the ternary modality ${\sf Khm}(a,b,c)$. The modality stands for ``there is a strategy to navigate from $a$ to $b$ while passing only through $c$". Although in \cite{lw17icla} variables $a$, $b$, and $c$ stand for statements, not sets of classes, like in our article, this difference is not very significant. The significant difference between Li and Wang's approach and ours is how we define strategy. They define strategy as a sequence of instructions that, when executed in the given order, lead from one state to another. For example, according to their definition, there is a strategy to navigate from state $c$ to state $a$ of the epistemic transition system depicted in Figure~\ref{intro-example figure}. The strategy is the sequence $00$. In other words, unlike our approach, their notion of strategy does not allow the agent to change her behaviour based on her current observations. In terms of our motivation example, their agent is a robot without sensors equipped with a stack that stores the list of instructions to be executed. This difference in semantics results in very different logical systems. For example, Armstrong's Transitivity principle, rephrased in their language, is valid under their semantics. 

Our current work builds on our previous paper~\cite{dn17arxiv-armstrong}, where we have shown that Armstrong axioms are sound and complete with respect to (unrestricted) navigability with recall strategies. In the same work we proved that logical system consisting of the Reflexivity, the Augmentation, the Left Monotonicity, and the Right Monotonicity axioms is  sound and complete with respect to (unrestricted) navigability with amnesic strategies. Formally, the main contribution of this work is a sound and complete logical system for {\em restricted} navigability under amnesic strategies. Informally, our contribution is the observation that there is a middle ground setting in which a form of transitivity holds even for agents without perfect recall.

\subsection{Article Outline}

The rest of this article is structured as following. In Section~\ref{syntax section} we introduce the syntax of our logical system. In Section~\ref{semantics section} we define the formal semantics of this system. Section~\ref{axioms section} lists the axioms of our system. In Section~\ref{examples section} we give several examples of formal derivations. These results will be used later in the proof of completeness. 
Section~\ref{soundness section} and Section~\ref{completeness section} prove the soundness and the completeness of our logical system respectively. Section~\ref{conclusion section} concludes. 

\section{Syntax}\label{syntax section}

In the introduction we discussed the ternary relation $A\rhd_B C$ as a relation between three sets of indistinguishability classes of a given epistemic transition system. In the rest of this article we present a formal logical system capturing the universally true properties of this relation in an arbitrary epistemic transition system. Note that sets of classes $A$, $B$ and $C$ are specific to a particular epistemic transition system and thus can not be used to formally state properties common to all epistemic transition systems. In this article we overcome this issue by assuming that there is a fixed {\em finite} set $V$ of {\em views}. A transition model specifies an observation function from states to views instead of specifying an indistinguishability relation between states. Informally, two states are indistinguishable if and only if the values of the observation function on these two states  are equal. Using views instead of classes the language of our logical system can be defined independently from a particular epistemic transition system as follows.

\begin{definition}
Let $\Phi$ be the minimal set of formulae such that
\begin{enumerate}
    \item $A\rhd_B C\in\Phi$ for each sets $A,B,C\subseteq V$,
    \item $\neg\phi,\phi\to\psi\in \Phi$, for each $\phi,\psi\in\Phi$.
\end{enumerate}
\end{definition}

The above assumption that set of views $V$ is finite is used later in the proof of completeness.

\section{Semantics}\label{semantics section}

In this section we define formal semantics of our logical system.

\begin{definition}\label{transition system}
Epistemic transition system is a tuple
$(S,o,I,\{\to_i\}_{i\in I})$, where
\begin{enumerate}
    \item $S$ is a set of states,
    \item $o: S\to V$ is an observation function,
    \item $I$ is a set of instructions,
    \item $\to_i$ is a binary relation between states for each $i\in I$.
\end{enumerate}
\end{definition}

For example, for the epistemic transition system $T_0$ depicted in Figure~\ref{intro-example figure}, set $S$ is $\{a,b,c,d,e,f,g,h\}$. Observation function $o$ is such that  $o(a)=o(g)=v_1$, $o(b)=v_2$, $o(c)=o(e)=v_3$, $o(d)=v_4$, $o(f)=v_5$ and $o(h)=v_6$, where $v_1$, $v_2$, $v_3$, $v_4$, $v_5$, and $v_6$ are arbitrary distinctive elements of set $V$. Instruction set $I$ is $\{0,1\}$. Relation $\to_0$ is $\{(a,h),(b,a),(c,b),(d,d),(e,d),(f,g),(g,h)$, $(h,g)\}$ and relation $\to_1$ is $\{(a,b),(b,c),(c,d),(d,d),(e,f),(f,e),(g,f),(h,a)\}$.

Note that epistemic transition system $T_0$ is deterministic in the sense that there is a unique state $v$ into which the system transitions from a given state $w$  under a given instruction $i$. Definition~\ref{transition system} specifies a transition system in terms of a transitive relation $\rightarrow_i$. We allow several states $v$ such that $w\to_i v$ to model {\em nondeterministic} transitions. We allow the set of such states $v$ to be empty to model {\em terminating} transitions. Informally, if in a state $w$ an instruction $i$ is invoked such that there is no $v$ for which $w\to_i v$, then the system terminates and no further instructions are executed. 

In the introduction we made a distinction between amnesic and recall strategies. Since the rest of the article deals only with amnesic strategies, we refer to such strategies simply as {\em strategies}.

\begin{definition}\label{strategy}
A strategy is an arbitrary function from $V$ to $I$.
\end{definition}

A side effect of our choice to use views instead of equivalence classes is that there might be one or more views that are not values of the observation function on any state in the epistemic transition system. Such views define what we call ``empty equivalence classes" in the introduction. Per Definition~\ref{strategy}, a strategy must still be defined on such views.

The next definition specifies the paths in an epistemic transition system that start in a given set of views and are compatible with a given strategy.

\begin{definition}\label{path}
For any set $A\subseteq V$ and any strategy $s$, let $Path_s(A)$ be the set of all (finite or infinite) sequences $w_0,w_1,w_2,\dots\in S$ such that 
\begin{enumerate}
    \item $o(w_0)\in A$, and
    \item $w_k\to_{s(o(w_k))}w_{k+1}$ for each $k\ge 0$ for which $w_{k+1}$ exists.
\end{enumerate}
\end{definition}
For epistemic transition system $T_0$, sequence $a,b,c$ belongs to set $Path_s(\{o(a)\})$, where $o(a)$ is the view assigned in system in $T_0$ to state $a$ and $s$ is a strategy such that $s(v)=1$ for each $v\in V$.

A path $\pi'$ is an extension of a path $\pi$  if  $\pi$ is a prefix of $\pi'$ or $\pi'=\pi$.


\begin{definition}\label{maxpath}
$MaxPath_s(A)$ is the set of all sequences in $Path_s(A)$ that are either infinite or cannot be extended to a longer sequence in $Path_s(A)$.
\end{definition}
For epistemic transition system $T_0$, the infinite sequence $a,b,c,d,d,\dots$ belongs to the set $MaxPath_s(\{o(a)\})$, where $o(a)$ is the view assigned in that epistemic transition system to state $a$ and $s$ is a strategy such that $s(v)=1$ for each $v\in V$.

\begin{lemma}\label{maxpath exists}
Any sequence that is in set $Path_s(A)$ can be extended to a sequence in set $MaxPath_s(A)$. 
\end{lemma}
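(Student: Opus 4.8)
The plan is to split on whether the given path $\pi = w_0, w_1, \dots \in Path_s(A)$ is infinite or finite. If $\pi$ is infinite, then by Definition~\ref{maxpath} it already belongs to $MaxPath_s(A)$, and since every path is an extension of itself, the claim is immediate. The real work is therefore confined to the finite case, where I must append states to $\pi$ until either no further extension is possible or the sequence has become infinite.

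To treat the finite case without an explicit case analysis on termination, I would consider the set $P$ of all sequences in $Path_s(A)$ that are extensions of $\pi$, partially ordered by the prefix relation, and then invoke Zorn's Lemma. The set $P$ is nonempty because $\pi \in P$. The key verification is that every chain $\{\sigma_j\}_{j\in J}$ in $P$ has an upper bound: since the $\sigma_j$ are pairwise prefix-comparable, their union $\sigma$ is a well-defined (finite or infinite) sequence of states, and I would check that $\sigma \in Path_s(A)$. The first condition of Definition~\ref{path} holds because every $\sigma_j$ shares the initial state $w_0$ of $\pi$, for which $o(w_0)\in A$; the second condition holds because any consecutive pair $w_k, w_{k+1}$ occurring in $\sigma$ already occurs in some $\sigma_j\in Path_s(A)$, which guarantees $w_k \to_{s(o(w_k))} w_{k+1}$. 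As $\pi$ is a prefix of each $\sigma_j$, it is a prefix of $\sigma$, so $\sigma\in P$ is the required upper bound.

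With the chain condition established, Zorn's Lemma yields a maximal element $\pi^*\in P$, and it remains to argue $\pi^*\in MaxPath_s(A)$. If $\pi^*$ is infinite this is immediate from Definition~\ref{maxpath}. If $\pi^*$ is finite but admitted a strictly longer extension $\pi^{**}\in Path_s(A)$, then $\pi^{**}$ would also be an extension of $\pi$ and hence lie in $P$, contradicting the maximality of $\pi^*$; thus $\pi^*$ cannot be so extended and again lies in $MaxPath_s(A)$. In either case $\pi^*$ is an extension of $\pi$ inside $MaxPath_s(A)$, as required.

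I expect the only delicate point to be the infinite case: one must verify that a sequence obtained as the union (limit) of ever-longer finite paths is genuinely a member of $Path_s(A)$, and rely on the fact that infinite paths are, by the stipulation of Definition~\ref{maxpath}, automatically maximal. A fully constructive alternative to Zorn's Lemma would greedily append, at each stage, some successor of the current last state whenever one exists, invoking dependent choice to handle the nondeterministic transitions permitted by Definition~\ref{transition system}; this process either halts at a finite path whose last state has no successor, or runs forever producing an infinite path. The Zorn formulation simply packages both outcomes into a single maximality statement, which is why I would favor it.
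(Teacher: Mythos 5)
Your proof is correct, and it takes a more formal route than the paper's. The paper disposes of this lemma in two sentences: any non-maximal path in $Path_s(A)$ can be extended to a strictly longer one, and ``repeating this step multiple times'' yields a finite or infinite member of $MaxPath_s(A)$ --- essentially the greedy/dependent-choice argument you sketch as your ``constructive alternative'' at the end. What your Zorn's Lemma formulation buys is precisely the step the paper leaves implicit: why an unbounded iteration of extensions actually terminates in an element of $MaxPath_s(A)$. Your verification that the union of a prefix-chain is again in $Path_s(A)$ (both conditions of Definition~\ref{path} being local to the initial state and to consecutive pairs) is the honest content of the paper's ``repeating this step,'' and your observation that a maximal element of the extension poset is either infinite (hence in $MaxPath_s(A)$ by fiat of Definition~\ref{maxpath}) or finite and inextendible is exactly the right case split. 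The only cost is invoking full choice where dependent choice suffices; the paper's version is shorter but glosses over the limit stage that you make explicit.
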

\begin{proof}
Any sequence in $Path_s(A)$ which is not in $MaxPath_s(A)$ can be extended to a longer sequence in $Path_s(A)$. Repeating this step multiple times one can get a finite or an infinite sequence in $MaxPath_s(A)$.
\end{proof}
The next definition introduces a technical notation that we use to define the semantics of the restricted navigability relation $A\rhd_B C$.
\begin{definition}\label{until}
Let $Until(A,B)$ be the set of all such sequences $w_0,w_1,w_2,\dots$ that there is $k_0\ge 0$ where
\begin{enumerate}
    \item $o(w_k)\in A$ for each $k< k_0$ and
    \item $o(w_{k_0})\in B$.
\end{enumerate}
\end{definition}

\begin{definition}\label{sat}
For any epistemic transition system $T$ and any formula $\phi$, satisfiability relation $T\vDash\phi$ is defined as follows:
\begin{enumerate}
    \item $T\vDash A\rhd_B C$ if $MaxPath_s(A)\subseteq Until(B,C)$ for some strategy $s$,
    \item $T\vDash\neg\phi$ if $T\nvDash\phi$,
    \item $T\vDash\phi\to\psi$ if $T\nvDash\phi$ or $T\vDash\psi$.
\end{enumerate}

\end{definition}

\section{Axioms}\label{axioms section}
In addition to the propositional tautologies in language $\Phi$, our logical system contains the following axioms:
\begin{enumerate}
    \item Reflexivity: $A \rhd_B C$, where $A \subseteq C$,
    \item Augmentation: $A \rhd_B C\to (A\cup D) \rhd_B (C\cup D)$,
    \item Transitivity: $A \rhd_B C \to (C \rhd_D E\to A\rhd_{B\cup D} E)$, where $B\cap D=\varnothing$,
    \item Early Bird: $A \rhd_B C\to A \rhd_{B\setminus C} C$,
    \item Trivial Path: $A\rhd_\varnothing B\to (A\setminus B)\rhd_\varnothing \varnothing$,
    \item Path to Nowhere: $A\rhd_B\varnothing\to A\rhd_\varnothing\varnothing$.
\end{enumerate}
We write $\vdash\phi$ if formula $\phi$ is provable in our logical system using the Modus Ponens inference rule. We write $X\vdash\phi$ if formula $\phi$ is provable from the axioms of our system and the set of additional axioms $X$.

\section{Examples of Derivations}\label{examples section}

The soundness and the completeness of our logical system is proven in Section~\ref{soundness section} and Section~\ref{completeness section}. In this section we give several examples of formal derivations in this system. The results obtained here are used in the proof of the completeness.

\begin{lemma}\label{remove left}
$\vdash A\rhd_B C\to A'\rhd_B C$, where $A'\subseteq A$. 
\end{lemma}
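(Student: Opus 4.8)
The plan is to derive this left-monotonicity principle from just the Reflexivity and Transitivity axioms, exploiting the fact that Reflexivity holds for an \emph{arbitrary} restriction set and in particular for $\varnothing$, which in turn makes the side condition of Transitivity vacuous.

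First I would note that since $A'\subseteq A$, the Reflexivity axiom yields $A'\rhd_\varnothing A$. The restriction set here is immaterial, so I deliberately take it to be $\varnothing$; this choice is what makes the subsequent application of Transitivity go through without any hypothesis. Next I would instantiate the Transitivity axiom with first statement $A'\rhd_\varnothing A$ and second statement $A\rhd_B C$, i.e.\ with the roles $A:=A'$, $B:=\varnothing$, $C:=A$, $D:=B$, $E:=C$. This gives the instance
\begin{equation*}
A'\rhd_\varnothing A \to \bigl(A\rhd_B C \to A'\rhd_{\varnothing\cup B} C\bigr),
\end{equation*}
whose side condition $\varnothing\cap B=\varnothing$ holds trivially.

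From the Reflexivity fact $A'\rhd_\varnothing A$ and this instance, Modus Ponens gives $A\rhd_B C\to A'\rhd_{\varnothing\cup B} C$. Since $\varnothing\cup B=B$, this is exactly the desired formula $A\rhd_B C\to A'\rhd_B C$, completing the derivation.

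There is no real obstacle here beyond the one key observation: selecting the empty set as the intermediate restriction set so that the disjointness requirement $B\cap D=\varnothing$ of Transitivity is met automatically and the union $\varnothing\cup B$ collapses back to $B$. Everything else is a direct chaining of Reflexivity into Transitivity.
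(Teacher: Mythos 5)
Your derivation is correct and matches the paper's own proof exactly: both obtain $A'\rhd_\varnothing A$ from Reflexivity and then chain it into Transitivity with the empty restriction set so that the disjointness side condition holds trivially and $\varnothing\cup B$ collapses to $B$. No further comment is needed.
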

\begin{proof}
 Assumption $A'\subseteq A$ implies that $\vdash A'\rhd_\varnothing A$ by the Reflexivity axiom. Hence, $\vdash A\rhd_B C\to A'\rhd_{\varnothing\cup B} C$ by the Transitivity axiom. Therefore, $\vdash A\rhd_B C\to A'\rhd_{B} C$.
\end{proof}

\begin{lemma}\label{add down}
$\vdash A\rhd_B C\to A\rhd_{B'} C$, where $B\subseteq B'$. 
\end{lemma}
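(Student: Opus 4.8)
The plan is to derive this subscript-monotonicity from the Transitivity axiom by composing the hypothesis with a \emph{trivial} reflexive step that is permitted to wander over the extra views in $B'\setminus B$. The key observation is that the subscript of the conclusion of Transitivity is the \emph{union} $B\cup D$, so to grow $B$ into $B'$ I only need to supply a second navigation whose subscript is exactly the difference $B'\setminus B$, and the Reflexivity axiom provides such a navigation for free (a path from $C$ to $C$ requires no steps at all).

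Concretely, I would first set $D=B'\setminus B$. Since $B\subseteq B'$, this choice has two useful properties: $B\cap D=\varnothing$ (so the side condition of Transitivity is met) and $B\cup D=B'$. Next, from the Reflexivity axiom applied to $C\subseteq C$, I get $\vdash C\rhd_{B'\setminus B} C$. Instantiating the Transitivity axiom with this same $A$, $B$, $C$, with $D=B'\setminus B$, and with $E=C$ yields
\begin{equation*}
\vdash A\rhd_B C\to\bigl(C\rhd_{B'\setminus B} C\to A\rhd_{B\cup(B'\setminus B)} C\bigr).
\end{equation*}
Discharging the inner antecedent $C\rhd_{B'\setminus B} C$ by the reflexive fact above (via Modus Ponens and propositional reasoning) gives $\vdash A\rhd_B C\to A\rhd_{B\cup(B'\setminus B)} C$. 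Finally, rewriting $B\cup(B'\setminus B)$ as $B'$ using $B\subseteq B'$ produces the desired $\vdash A\rhd_B C\to A\rhd_{B'} C$.

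I do not expect a genuine obstacle here; the only point requiring a moment's thought is choosing the right instantiation of $D$, after which the set-theoretic identities $B\cap(B'\setminus B)=\varnothing$ and $B\cup(B'\setminus B)=B'$ do all the work. This proof is essentially the dual of Lemma~\ref{remove left}: where that lemma shrinks the source set $A$ by \emph{prefixing} a reflexive step, this one enlarges the subscript $B$ by \emph{appending} a reflexive step, both via a single application of Transitivity.
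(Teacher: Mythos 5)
Your proof is correct and takes essentially the same approach as the paper: both manufacture a trivial navigation over $B'\setminus B$ via Reflexivity and compose it with the hypothesis through a single application of Transitivity, relying on $B\cap(B'\setminus B)=\varnothing$ and $B\cup(B'\setminus B)=B'$. The only (inessential) difference is the order of composition: the paper prefixes the reflexive step as $A\rhd_{B'\setminus B}A$ before $A\rhd_B C$, whereas you append $C\rhd_{B'\setminus B}C$ after it, mirroring the paper's own proof of Lemma~\ref{add right}.
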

\begin{proof}
By the Reflexivity axiom, $\vdash A\rhd_{B'\setminus B} A$. Hence, by the Transitivity axiom, $\vdash A\rhd_B C\to A\rhd_{(B'\setminus B)\cup B} C$. Therefore, $\vdash A\rhd_B C\to A\rhd_{B'} C$, due to the assumption $B\subseteq B'$.
\end{proof}

\begin{lemma}\label{add right}
$\vdash A\rhd_B C\to A\rhd_{B} C'$, where $C\subseteq C'$. 
\end{lemma}
\begin{proof}
By the Transitivity axiom, $\vdash A\rhd_B C\to(C\rhd_\varnothing C'\to A\rhd_B C')$. Hence, by the laws of logical reasoning, 
\begin{equation}\label{trans c}
\vdash C\rhd_\varnothing C'\to (A\rhd_B C\to A\rhd_B C').
\end{equation}
At the same time, the assumption $C\subseteq C'$ implies $\vdash C\rhd_\varnothing C'$ by the Reflexivity axiom. Hence, $A\rhd_B C\to A\rhd_B C'$ due to statement (\ref{trans c}).
\end{proof}

\begin{lemma}\label{remove void}
$\vdash B'\rhd_\varnothing\varnothing \to(A\rhd_B C \to A\rhd_{B\setminus B'} C).$
\end{lemma}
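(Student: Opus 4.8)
The statement asserts that an empty restriction set $B'$ can be stripped from the subscript of any restricted navigability claim. The semantic intuition is clean: the hypothesis $B'\rhd_\varnothing\varnothing$ forces every class in $B'$ to be empty, so no path can ever actually visit $B'$, and hence ``staying inside $B$'' is indistinguishable from ``staying inside $B\setminus B'$.'' The plan is to realize this idea purely syntactically by first \emph{folding} $B'$ into the destination set, then invoking the Early Bird axiom to erase it from the subscript, and finally \emph{unfolding} $B'$ out of the destination using the emptiness hypothesis. Throughout, I would prove the nested implication by assuming both antecedents $B'\rhd_\varnothing\varnothing$ and $A\rhd_B C$ and deriving $A\rhd_{B\setminus B'}C$.

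The first half of the derivation folds $B'$ into both endpoints and then removes it from the subscript. Starting from $A\rhd_B C$, the Augmentation axiom (with augmenting set $B'$) gives $(A\cup B')\rhd_B(C\cup B')$. Applying the Early Bird axiom to this formula yields $(A\cup B')\rhd_{B\setminus(C\cup B')}(C\cup B')$, so that $B'$ has been swept out of the subscript along with $C$. The spurious $B'$ on the left is then discarded by Lemma~\ref{remove left}, since $A\subseteq A\cup B'$, producing $A\rhd_{B\setminus(C\cup B')}(C\cup B')$.

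The second half unfolds $B'$ out of the destination, and here the emptiness hypothesis finally does its work. Applying the Augmentation axiom to $B'\rhd_\varnothing\varnothing$ with augmenting set $C$ gives $(C\cup B')\rhd_\varnothing C$. I would then chain the two facts via the Transitivity axiom: combining $A\rhd_{B\setminus(C\cup B')}(C\cup B')$ with $(C\cup B')\rhd_\varnothing C$ (the disjointness side condition $B\cap D=\varnothing$ is automatic because the second subscript is $\varnothing$) produces $A\rhd_{B\setminus(C\cup B')}C$. Finally, since $B\setminus(C\cup B')\subseteq B\setminus B'$, Lemma~\ref{add down} restores the target subscript and gives $A\rhd_{B\setminus B'}C$, completing the derivation.

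I expect the main obstacle to be conceptual rather than computational: recognizing that the emptiness of $B'$ permits it to be treated as part of the destination set, so that the Early Bird principle---whose intended role is to remove the destination $C$ from the subscript---can be repurposed to remove $B'$ as well. Once that insight is in hand, the only residual difficulty is cleaning up the two copies of $B'$ that Augmentation introduces at the endpoints, which the monotonicity lemma handles on the left and a second use of Augmentation plus Transitivity handles on the right.
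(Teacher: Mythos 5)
Your proof is correct and follows essentially the same route as the paper's: fold $B'$ into the destination set, apply the Early Bird axiom to sweep it out of the subscript, then use Augmentation on $B'\rhd_\varnothing\varnothing$ together with Transitivity to unfold $B'$ back out of the destination. The only cosmetic differences are that you reach $A\rhd_{B}(C\cup B')$ via Augmentation plus Lemma~\ref{remove left} instead of Lemma~\ref{add right}, and you postpone the Lemma~\ref{add down} step to the end rather than applying it before Transitivity.
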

\begin{proof}
First, $\vdash A\rhd_B C\to A\rhd_{B}(C\cup B')$ by Lemma~\ref{add right}. Second, the formula $A\rhd_B (C\cup B')\to A \rhd_{B\setminus (C\cup B')}(C\cup B')$ is an instance of the Early Bird axiom. Third, $\vdash A\rhd_{B\setminus (C\cup B')}(C\cup B')\to A\rhd_{B\setminus B'}(C\cup B')$ by Lemma~\ref{add down} because $B\setminus (C\cup B')\subseteq B\setminus B'$. The three statements above by the laws of propositional reasoning imply that 
\begin{equation}\label{big formula}
\vdash A\rhd_B C \to A\rhd_{B\setminus B'}(C\cup B').
\end{equation}
At the same time, 
\begin{equation}\label{small formula}
B'\rhd_\varnothing\varnothing\to (B'\cup C)\rhd_\varnothing C
\end{equation}
is an instance of the Augmentation axiom. Finally, the formula
\begin{equation}\label{very big formula}
A\rhd_{B\setminus B'}(C\cup B')\to((B'\cup C)\rhd_\varnothing C \to A\rhd_{B\setminus B'}C)
\end{equation}
is an instance of the Transitivity axiom. Taken together, statement~(\ref{big formula}), statement~(\ref{small formula}), and statement~(\ref{very big formula}) imply by the laws of propositional reasoning that
$\vdash B'\rhd_\varnothing\varnothing \to(A\rhd_B C \to A\rhd_{B\setminus B'} C)$.
\end{proof}

\begin{lemma}\label{super transitivity}
$\vdash A\rhd_B C \to (C \rhd_D E \to A \rhd_{B\cup D} E)$ if $B\cap D\subseteq C$.
\end{lemma}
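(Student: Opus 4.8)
The plan is to reduce the hypothesis $B\cap D\subseteq C$ to the disjointness condition $B'\cap D=\varnothing$ demanded by the Transitivity axiom, by first stripping $C$ out of the left-hand restriction set. The key observation is that the Early Bird axiom lets us replace the restriction set $B$ of the first navigation leg by $B\setminus C$ without losing anything, and that $(B\setminus C)\cap D=\varnothing$ follows immediately from $B\cap D\subseteq C$. Indeed, if $x\in(B\setminus C)\cap D$, then $x\in B\cap D\subseteq C$, contradicting $x\notin C$.

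Concretely, I would proceed in three derivation steps. First, apply the Early Bird axiom to the first hypothesis to obtain $\vdash A\rhd_B C\to A\rhd_{B\setminus C} C$. Second, since $(B\setminus C)\cap D=\varnothing$, the Transitivity axiom applies directly and yields
\[
\vdash A\rhd_{B\setminus C} C\to\bigl(C\rhd_D E\to A\rhd_{(B\setminus C)\cup D} E\bigr).
\]
Third, because $(B\setminus C)\cup D\subseteq B\cup D$, Lemma~\ref{add down} gives $\vdash A\rhd_{(B\setminus C)\cup D} E\to A\rhd_{B\cup D} E$, which enlarges the restriction set back to the desired $B\cup D$. Chaining these three implications together by propositional reasoning produces exactly $\vdash A\rhd_B C\to(C\rhd_D E\to A\rhd_{B\cup D} E)$.

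The only genuinely substantive step is the first one: recognizing that Early Bird is the right tool to convert the weaker hypothesis $B\cap D\subseteq C$ into a situation where the Transitivity axiom is literally applicable. Once that move is made, the rest is bookkeeping with the monotonicity Lemma~\ref{add down} and propositional logic, so I do not anticipate any serious obstacle beyond verifying the set-theoretic identities $(B\setminus C)\cap D=\varnothing$ and $(B\setminus C)\cup D\subseteq B\cup D$.
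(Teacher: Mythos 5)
Your proof is correct and uses essentially the same approach as the paper: Early Bird to shrink the restriction set until the disjointness hypothesis of the Transitivity axiom holds, followed by Lemma~\ref{add down} to enlarge it back. The only (immaterial) difference is that the paper pads $B\setminus C$ up to $B\setminus D$ \emph{before} applying Transitivity, so that $(B\setminus D)\cup D=B\cup D$ comes out directly, whereas you apply Transitivity at $B\setminus C$ and enlarge $(B\setminus C)\cup D$ to $B\cup D$ afterwards.
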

\begin{proof}
Assumption $B\cap D\subseteq C$ implies that $B\setminus C\subseteq B\setminus D$. Thus, by Lemma~\ref{add down}, $\vdash A\rhd_{B\setminus C}C\to A\rhd_{B\setminus D} C$.
At the same time $\vdash A\rhd_B C \to A\rhd_{B\setminus C} C$ by the Early Bird axiom. Hence, by the laws of propositional reasoning, 
$$\vdash A\rhd_B C\to  A\rhd_{B\setminus D} C.$$ 
Note that $A\rhd_{B\setminus D}C\to(C\rhd_D E \to A\rhd_{B\cup D} E)$ is an instance of the Transitivity axiom. Therefore, $\vdash A\rhd_B C \to (C \rhd_D E \to A \rhd_{B\cup D} E)$ by the laws of propositional reasoning.
\end{proof}

\section{Soundness}\label{soundness section}

In this section we prove the soundness of our logical system. The soundness of each of the axioms is stated as a separate lemma. The soundness theorem for the logical system is given in the end of the section. We start with a technical lemma that lists properties of sets $Until$ and $MaxPath$. These properties are used in the proofs of the soundness of the axioms. 

\begin{lemma}\label{until lemma} 
For any set $A,B,C\subseteq V$ and any strategy $s$:
\begin{enumerate}
    \item $MaxPath_s(A)\subseteq Until(\varnothing,A)$,
    \item $MaxPath_s(A)\subseteq MaxPath_s(A')$, where $A\subseteq A'$,
    \item $MaxPath_s(A\cup B) = MaxPath_s(A) \cup MaxPath_s(B)$,
    \item $Until(A,\varnothing)=\varnothing$,
    \item $Until(A,B)\subseteq Until(A,B')$, where $B\subseteq B'$,
    \item $Until(A,B)\subseteq Until(A',B)$, where $A\subseteq A'$,
    \item $Until(A,B)\subseteq Until(A\setminus B,B)$,
    \item $Until(\varnothing,A)\cap Until(\varnothing, B)\subseteq Until(\varnothing,A\cap B)$.
\end{enumerate}
\end{lemma}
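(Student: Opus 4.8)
The plan is to prove each of the eight items directly from the relevant definitions (Definition~\ref{maxpath} for $MaxPath$ and Definition~\ref{until} for $Until$), since these are all elementary set-theoretic facts about the two constructions. Each item is an inclusion or identity between sets of sequences, so the uniform strategy is to fix an arbitrary sequence $\pi = w_0, w_1, w_2, \dots$ in the left-hand side, unfold what membership means, and verify the defining conditions of the right-hand side, choosing the witness index $k_0$ appropriately where $Until$ is involved.

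First I would dispatch the items concerning only $MaxPath$. For item~(1), given $\pi \in MaxPath_s(A)$ I have $o(w_0) \in A$, so taking $k_0 = 0$ makes condition~2 of Definition~\ref{until} hold trivially (condition~1 is vacuous since there is no $k < 0$), hence $\pi \in Until(\varnothing, A)$. Item~(2) follows because the condition $o(w_0) \in A$ in Definition~\ref{path} is monotone in $A$, and the maximality condition of Definition~\ref{maxpath} does not change, so any maximal $s$-path starting in $A$ is also a maximal $s$-path starting in $A'$ when $A \subseteq A'$; I would note that $Path_s(A) \subseteq Path_s(A')$ and that maximality is preserved because the transition condition is independent of the starting set. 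Item~(3) is the analogous statement for unions: $o(w_0) \in A \cup B$ iff $o(w_0) \in A$ or $o(w_0) \in B$, giving $Path_s(A \cup B) = Path_s(A) \cup Path_s(B)$, and maximality transfers both ways, yielding the claimed equality.

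Next I would handle the $Until$ items, which are all immediate from Definition~\ref{until}. Item~(4) holds because condition~2 demands $o(w_{k_0}) \in \varnothing$, which is impossible, so $Until(A, \varnothing) = \varnothing$. Items~(5) and~(6) are monotonicity in the second and first coordinates respectively: enlarging $B$ only weakens condition~2, and enlarging $A$ only weakens condition~1, so membership is preserved. Item~(7) is slightly more delicate: given $\pi \in Until(A, B)$ with witness $k_0$, I would take the \emph{least} such $k_0$; then for every $k < k_0$ minimality forces $o(w_k) \notin B$ (otherwise a smaller witness would exist), and combined with $o(w_k) \in A$ this gives $o(w_k) \in A \setminus B$, establishing $\pi \in Until(A \setminus B, B)$. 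Item~(8): given $\pi$ in both $Until(\varnothing, A)$ and $Until(\varnothing, B)$, their witnesses $k_A$ and $k_B$ must both equal $0$, because condition~1 with first coordinate $\varnothing$ forbids any $k < k_0$; then $o(w_0) \in A$ and $o(w_0) \in B$, so $o(w_0) \in A \cap B$ and $\pi \in Until(\varnothing, A \cap B)$ via witness $0$.

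I expect the only genuinely non-routine step to be item~(7), where the naive choice of witness does not work and one must pass to the minimal $k_0$ to guarantee that the earlier states avoid $B$; every other item is a one- or two-line unfolding of the definitions. A secondary point worth care is the "vacuously true" handling of the $k < k_0$ condition when $k_0 = 0$, which recurs in items~(1),~(7), and~(8) and should be stated explicitly to avoid the appearance of a gap.
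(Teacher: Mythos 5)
Your proof is correct and takes the same route the paper intends: the paper's own proof of Lemma~\ref{until lemma} simply states that all eight items follow from Definition~\ref{maxpath} and Definition~\ref{until}, and your write-up supplies exactly the routine unfoldings that claim presupposes. The two points you flag as needing care --- passing to the minimal witness $k_0$ in item~(7), and forcing $k_0=0$ in item~(8) when the first coordinate is $\varnothing$ --- are handled correctly.
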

\begin{proof}
Statements 1 through 8 follow from Definition~\ref{maxpath} and Definition~\ref{until}.
\end{proof}
Next, we show the soundness of the Reflexivity and Augmentation axioms.
\begin{lemma}
If $A\subseteq C$, then $T\vDash A\rhd_B C$.
\end{lemma}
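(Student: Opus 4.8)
The goal is to show $T \vDash A \rhd_B C$ whenever $A \subseteq C$. By the first clause of Definition~\ref{sat}, it suffices to exhibit a single strategy $s$ for which $MaxPath_s(A) \subseteq Until(B,C)$; in fact, because the claim should hold universally, I expect that \emph{any} strategy will work, so I would pick an arbitrary strategy $s$ and verify the inclusion for it.

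The plan is to take an arbitrary sequence $w_0, w_1, w_2, \dots$ in $MaxPath_s(A)$ and show it belongs to $Until(B,C)$. By Definition~\ref{maxpath}, any such sequence lies in $Path_s(A)$, so by clause~1 of Definition~\ref{path} we have $o(w_0) \in A$. Since $A \subseteq C$ by hypothesis, this gives $o(w_0) \in C$. Now I would instantiate Definition~\ref{until} with $k_0 = 0$: the condition ``$o(w_k) \in B$ for each $k < k_0$'' is vacuously satisfied because there are no indices below $0$, and the condition ``$o(w_{k_0}) \in B$'' from Definition~\ref{until} is exactly $o(w_0) \in C$, which we have just established. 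Hence the sequence is in $Until(B,C)$, completing the inclusion.

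A slicker route that avoids unwinding definitions is to invoke Lemma~\ref{until lemma} directly. Its clause~1 gives $MaxPath_s(A) \subseteq Until(\varnothing, A)$, and its clause~5 gives $Until(\varnothing, A) \subseteq Until(\varnothing, C)$ since $A \subseteq C$; finally clause~6 gives $Until(\varnothing, C) \subseteq Until(B, C)$ since $\varnothing \subseteq B$. Chaining these inclusions yields $MaxPath_s(A) \subseteq Until(B,C)$ for every strategy $s$, which is more than enough.

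I do not anticipate a genuine obstacle here, since the statement is essentially the semantic counterpart of the Reflexivity axiom and the goal is reached ``before navigation even starts.'' The only point requiring a little care is the handling of the degenerate witness $k_0 = 0$, where the ``staying in $B$'' requirement is vacuous—this is precisely the technical detail noted in the introduction that the restriction does not constrain the destination itself. Using the Lemma~\ref{until lemma} chain sidesteps even that subtlety, so I would present the proof in that form for brevity.
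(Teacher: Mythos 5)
Your proposal is correct, and the ``slicker route'' you settle on --- chaining clauses 1, 5, and 6 of Lemma~\ref{until lemma} to get $MaxPath_s(A)\subseteq Until(\varnothing,A)\subseteq Until(\varnothing,C)\subseteq Until(B,C)$ --- is exactly the proof given in the paper. The preliminary first-principles verification with $k_0=0$ is also sound, just a more explicit unpacking of the same facts.
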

\begin{proof}
By Lemma~\ref{until lemma} and the assumption $A\subseteq C$,
$$
MaxPath_s(A)\subseteq Until(\varnothing,A)\subseteq Until(\varnothing,C)\subseteq Until(B,C).
$$
Therefore, $T\vDash A\rhd_B C$ by Definition~\ref{sat}.
\end{proof}

\begin{lemma}
If $T\vDash A\rhd_B C$, then $T\vDash (A\cup D)\rhd_B (C\cup D)$.
\end{lemma}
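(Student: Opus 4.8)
The plan is to reuse the very same strategy that witnesses the hypothesis. By Definition~\ref{sat}, the assumption $T\vDash A\rhd_B C$ gives a strategy $s$ with $MaxPath_s(A)\subseteq Until(B,C)$, and to establish $T\vDash(A\cup D)\rhd_B(C\cup D)$ it suffices to exhibit \emph{one} strategy satisfying the corresponding inclusion. I would claim that this same $s$ works, so that the whole argument reduces to the set-theoretic containment $MaxPath_s(A\cup D)\subseteq Until(B,C\cup D)$.

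First I would split the left-hand side using part~3 of Lemma~\ref{until lemma}, namely $MaxPath_s(A\cup D)=MaxPath_s(A)\cup MaxPath_s(D)$, and then bound the two pieces separately. For the first piece, I would chain the hypothesis with part~5 of Lemma~\ref{until lemma} (monotonicity of $Until$ in its second argument, using $C\subseteq C\cup D$) to get $MaxPath_s(A)\subseteq Until(B,C)\subseteq Until(B,C\cup D)$.

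The second piece is the one that carries the real content of Augmentation: paths that start in $D$ are already ``done'' at their very first state, since that state lies in $D\subseteq C\cup D$, so they witness $Until$ with $k_0=0$. Formally I would run $MaxPath_s(D)$ through part~1 of Lemma~\ref{until lemma} to obtain $MaxPath_s(D)\subseteq Until(\varnothing,D)$, then enlarge the second argument via part~5 ($D\subseteq C\cup D$) and the first argument via part~6 ($\varnothing\subseteq B$) to reach $Until(B,C\cup D)$. Combining the two bounds yields $MaxPath_s(A\cup D)\subseteq Until(B,C\cup D)$, and Definition~\ref{sat} then delivers $T\vDash(A\cup D)\rhd_B(C\cup D)$.

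I do not expect a genuine obstacle here, as every step is an application of the bookkeeping facts already collected in Lemma~\ref{until lemma}; the only point requiring a moment's thought is recognizing that the states newly added through $D$ satisfy the destination condition immediately, which is precisely why no modification of the strategy is needed.
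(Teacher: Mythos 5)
Your proposal is correct and matches the paper's proof essentially step for step: both reuse the witnessing strategy $s$ unchanged, split $MaxPath_s(A\cup D)$ via part~3 of Lemma~\ref{until lemma}, and bound the two pieces using parts~1, 5, and~6 to land in $Until(B,C\cup D)$. No differences worth noting.
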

\begin{proof}
Let $T\vDash A\rhd_B C$.
Thus, $MaxPath_s(A)\subseteq Until(B,C)$ by Definition~\ref{sat}. Hence, by  Lemma~\ref{until lemma},
\begin{eqnarray*}
MaxPath_s(A\cup D)&=& MaxPath_s(A) \cup MaxPath_s(D)\\
&\subseteq& Until(B,C)\cup Until(\varnothing,D)\\
&\subseteq& Until(B,C)\cup Until(B,D)\\
&\subseteq& Until(B,C\cup D)\cup Until(B,C\cup D)\\
&=&Until(B,C\cup D).
\end{eqnarray*}
Therefore, $T\vDash A,D\rhd_B C,D$ by Definition~\ref{sat}.
\end{proof}

The proof of the soundness of the Transitivity axiom is based on the following auxiliary lemma.

\begin{lemma}\label{switch s}
If $MaxPath_{s_1}(A)\subseteq Until(B,C)$ and  $s_1(v)=s_2(v)$ for each $v\in B$, then $MaxPath_{s_2}(A)\subseteq Until(B,C)$.
\end{lemma}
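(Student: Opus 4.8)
The plan is to fix an arbitrary sequence $\pi = w_0, w_1, w_2, \dots \in MaxPath_{s_2}(A)$ and show that $\pi \in Until(B,C)$; since $\pi$ is arbitrary, this yields the desired inclusion. The whole argument rests on one observation: because $s_1(v) = s_2(v)$ for every $v \in B$, the two strategies prescribe the same instruction at any state whose view lies in $B$, so a path behaves identically under $s_1$ and under $s_2$ for as long as its states stay inside $B$.

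First I would split into two cases according to whether $\pi$ ever leaves $B$. If $o(w_k) \in B$ for every index $k$ of $\pi$, then the same transitions are used under $s_1$ as under $s_2$ all along $\pi$, and since $o(w_0)\in A$ is inherited from $\pi\in Path_{s_2}(A)$, we get $\pi \in Path_{s_1}(A)$. Moreover $\pi$ remains maximal under $s_1$: if $\pi$ were finite, ending at $w_n$, then maximality under $s_2$ means $w_n$ has no $s_2(o(w_n))$-successor, and $o(w_n) \in B$ gives $s_2(o(w_n)) = s_1(o(w_n))$, so $w_n$ has no $s_1$-successor either. Hence $\pi \in MaxPath_{s_1}(A) \subseteq Until(B,C)$ by the hypothesis, finishing this case.

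Otherwise, let $m$ be the least index with $o(w_m) \notin B$. For every $k < m$ we have $o(w_k) \in B$, so $s_1(o(w_k)) = s_2(o(w_k))$ and the prefix $w_0, \dots, w_m$ satisfies the defining conditions of $Path_{s_1}(A)$. By Lemma~\ref{maxpath exists} I would extend this prefix to some $\pi' \in MaxPath_{s_1}(A)$, which the hypothesis places in $Until(B,C)$; let $k_0$ be the corresponding witness, so that the view of the $k$-th element of $\pi'$ lies in $B$ for all $k < k_0$ and the view of its $k_0$-th element lies in $C$. The key step is to argue $k_0 \le m$: since $\pi'$ extends the prefix ending at $w_m$ and $o(w_m) \notin B$, the index $m$ cannot satisfy $m < k_0$, for that would force $o(w_m) \in B$; hence $k_0 \le m$. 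Because $\pi$ and $\pi'$ agree on all indices up to $m$, the same $k_0$ witnesses $\pi \in Until(B,C)$.

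The main obstacle I anticipate is the bookkeeping around the two cases and, within the second case, the verification that the $Until$ witness $k_0$ for the extended $s_1$-path lands at or before the first exit index $m$; this is precisely what lets the witness transfer back to the original $s_2$-path. The maximality transfer in the first case is routine once the agreement of the two strategies on $B$ is invoked.
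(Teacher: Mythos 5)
Your proof is correct and follows essentially the same route as the paper's: split on whether the path ever leaves the restriction set, transfer the prefix that stays in $B$ to $s_1$, extend it to a maximal $s_1$-path via Lemma~\ref{maxpath exists}, and invoke the hypothesis. The only cosmetic differences are that the paper cases on leaving $B\setminus C$ rather than $B$ and finishes its last subcase by contradiction, whereas you locate the $Until$-witness $k_0\le m$ directly and transfer it back to $\pi$; both arguments are sound.
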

\begin{proof}
Consider any sequence $\pi=w_0,w_1,\dots\in MaxPath_{s_2}(A)$. We prove that $\pi\in Until(B,C)$ by separating the following two cases:

\noindent{\em Case I}: $o(w_k)\in B\setminus C$ for each $k\ge 0$. Thus, $s_1(o(w_k))=s_2(o(w_k))$ due to the assumption of the lemma that $s_1(v)=s_2(v)$ for each view $v\in B$. Hence, the assumption $\pi\in MaxPath_{s_2}(A)$ implies that $\pi\in MaxPath_{s_1}(A)$ by Definition~\ref{path}. Therefore, $\pi\in Until(B,C)$ due to the assumption $MaxPath_{s_1}(A)\subseteq Until(B,C)$ of the lemma.

\noindent{\em Case II}: $o(w_k)\notin B\setminus C$ for some $k\ge 0$. Let $m\ge 0$ be the smallest such $k$ that $o(w_k)\notin B\setminus C$. Thus, 
\begin{enumerate}
    \item $o(w_k)\in (B\setminus C)$ for each $k< m$ and
    \item $o(w_{m})\notin (B\setminus C)$.
\end{enumerate}
Hence,
$o(w_k)\in B$ for each $k< m$. We now further split Case II into two different parts: 

\noindent{\em Part A}: $o(w_m)\in C$.
Therefore, $\pi\in Until(B,C)$ by Definition~\ref{until}. 

\noindent{\em Part B}: $o(w_m)\notin C$. Note that condition $o(w_k)\in B$ for each $k< m$ implies that $s_1(o(w_k))=s_2(o(w_k))$ for each $k<m$ due to the assumption of the lemma that $s_1(v)=s_2(v)$ for each $v\in B$. 
Thus, $w_0,w_1,\dots,w_m\in Path_{s_1}(A)$. By Lemma~\ref{maxpath exists}, this sequence can be extended to a sequence $\pi'=w_0,w_1,\dots,w_m,\dots\in MaxPath_{s_1}(A)$.

At the same time  $o(w_k)\in B$ for each integer $k\le m$ by the choice of $k$ and $o(w_m)\notin C$ by the assumption of the case. Thus, $\pi'\notin Until(B,C)$.
Therefore, $\pi' \in MaxPath_{s_1}(A)$ but $\pi'\notin Until(B,C)$, which contradicts the assumption of the lemma $MaxPath_{s_1}(A) \subseteq Until(B,C)$.
\end{proof}

We are now ready to finish the proof of the soundness of the remaining axioms of our logical system.

\begin{lemma}\label{transitivity sound}
If $T\vDash A\rhd_B C$  and $T\vDash C\rhd_D E$,  then $T\vDash A\rhd_{B\cup D} E$, where $B\cap D=\varnothing$.
\end{lemma}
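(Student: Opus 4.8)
The plan is to build a single strategy that simultaneously witnesses the passage from $A$ to $C$ and from $C$ to $E$, and then to split each maximal path from $A$ at the moment it first reaches $C$. By Definition~\ref{sat}, the hypotheses supply strategies $s_1$ and $s_2$ with $MaxPath_{s_1}(A)\subseteq Until(B,C)$ and $MaxPath_{s_2}(C)\subseteq Until(D,E)$. The key idea is to define a combined strategy $s$ that agrees with $s_1$ on every view in $B$ and with $s_2$ on every view in $D$, and is defined arbitrarily elsewhere. This is exactly where the hypothesis $B\cap D=\varnothing$ is needed: it guarantees that the two prescriptions never conflict, so $s$ is a well-defined function from $V$ to $I$. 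Since $s$ agrees with $s_1$ on $B$, Lemma~\ref{switch s} yields $MaxPath_s(A)\subseteq Until(B,C)$; since $s$ agrees with $s_2$ on $D$, the same lemma yields $MaxPath_s(C)\subseteq Until(D,E)$.

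It then remains to show $MaxPath_s(A)\subseteq Until(B\cup D,E)$. Fix any $\pi=w_0,w_1,\dots\in MaxPath_s(A)$. From $\pi\in Until(B,C)$ I obtain an index $k_0$ with $o(w_k)\in B$ for all $k<k_0$ and $o(w_{k_0})\in C$. I would then consider the suffix $\sigma=w_{k_0},w_{k_0+1},\dots$. Since $o(w_{k_0})\in C$ and $\sigma$ continues to follow $s$, we have $\sigma\in Path_s(C)$; moreover $\sigma$ inherits maximality from $\pi$, because extendability of a finite path depends only on its final state and the strategy, so $\sigma\in MaxPath_s(C)$. Applying $MaxPath_s(C)\subseteq Until(D,E)$ gives an index $j_0$ with $o(w_{k_0+j})\in D$ for all $j<j_0$ and $o(w_{k_0+j_0})\in E$.

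Finally I would set $m_0=k_0+j_0$ and verify the two conditions of Definition~\ref{until}. The endpoint is correct since $o(w_{m_0})=o(w_{k_0+j_0})\in E$. For the prefix, any $k<m_0$ falls into one of two ranges: if $k<k_0$ then $o(w_k)\in B\subseteq B\cup D$, and if $k_0\le k<k_0+j_0$ then writing $j=k-k_0$ gives $o(w_k)=o(w_{k_0+j})\in D\subseteq B\cup D$. This covers the edge case $j_0=0$ automatically, since then the second range is empty and $m_0=k_0$. Hence $\pi\in Until(B\cup D,E)$, so $MaxPath_s(A)\subseteq Until(B\cup D,E)$ and $T\vDash A\rhd_{B\cup D}E$ by Definition~\ref{sat}.

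I expect the main obstacle to be the reduction to a single strategy: the two hypotheses a priori supply two \emph{different} witnesses, while the definition of satisfiability demands one strategy for the conclusion. Lemma~\ref{switch s} is precisely the tool that lets the combined strategy inherit both containments, and the disjointness condition $B\cap D=\varnothing$ is what makes the combination coherent. The remaining path-splitting argument is routine once one checks that the suffix of a maximal path, taken from its first entry point into $C$, is again a maximal path.
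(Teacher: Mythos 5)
Your proof is correct and follows essentially the same route as the paper's: combine $s_1$ and $s_2$ into one strategy using the disjointness of $B$ and $D$ (the paper takes $s_2$ where you say ``arbitrary,'' which is immaterial), transfer both containments via Lemma~\ref{switch s}, and split each maximal path at its first entry into $C$. Your explicit remark that the suffix from $w_{k_0}$ is again a maximal path is a point the paper leaves implicit, but the argument is the same.
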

\begin{proof}
By Definition~\ref{sat}, the assumption $T\vDash A\rhd_B C$ implies that there is a strategy $s_1$ such that $MaxPath_{s_1}(A)\subseteq Until(B,C)$.
Similarly,
the assumption $T\vDash C\rhd_D E$ implies that there is a strategy $s_2$ such that $MaxPath_{s_2}(C)\subseteq Until(D,E)$.

Define strategy $s$ as follows
\begin{equation}\label{s def}
s(v)=
\begin{cases}
s_1(v) & \mbox{ if } v\in B,\\
s_2(v) & \mbox{ otherwise}.
\end{cases}
\end{equation}
By Definition~\ref{sat}, it suffices to show that 
$MaxPath_s(A)\subseteq Until(B\cup D,E)$.
Indeed, consider any sequence $\pi=w_0,w_1,w_2,\dots\in MaxPath_s(A)$. We will show that $\pi\in Until(B\cup D,E)$. 

Note  that $s_1(v)=s(v)$ for each $v\in B$ by equation~(\ref{s def}). Also $MaxPath_{s_1}(A)\subseteq Until(B,C)$ by the choice of strategy $s_1$. Thus, $MaxPath_{s}(A)\subseteq Until(B,C)$ By Lemma~\ref{switch s}. Hence, $\pi\in  Until(B,C)$ because $\pi\in MaxPath_s(A)$. Thus, by Definition~\ref{until}, there is an integer $k_0\ge 0$ such that 
\begin{enumerate}
    \item $o(w_k)\in B$ for each integer $k$ such that $k< k_0$,
    \item $o(w_{k_0})\in C$.
\end{enumerate}
Note  that $s_2(v)=s(v)$ for each view $v\in D$ by equation~(\ref{s def})  and the assumption $B\cap D=\varnothing$. Also, $MaxPath_{s_2}(C)\subseteq Until(D,E)$ by the choice of strategy $s_2$. Thus, $MaxPath_{s}(C)\subseteq Until(D,E)$ by Lemma~\ref{switch s}.
Consider now the sequence $\pi'=w_{k_0},w_{k_0+1},w_{k_0+2},\dots\in MaxPath_s(C)$. Then, $\pi'\in Until(D,E)$. Hence, by Definition~\ref{until}, there is an integer $k'_0\ge k_0$ such that 
\begin{enumerate}
    \item $o(w_k)\in D$ for each integer $k$ such that $k_0\le k< k'_0$,
    \item $o(w_{k'_0})\in E$.
\end{enumerate}
Therefore, $\pi\in Until(B\cup D,E)$ by Definition~\ref{until}.
\end{proof}

\begin{lemma}
If $T\vDash A\rhd_B C$, then $T\vDash A\rhd_{B\setminus C} C$.
\end{lemma}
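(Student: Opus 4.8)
The plan is to prove this directly from the semantics in Definition~\ref{sat} by reusing the very same strategy, since the subscript $B$ only controls the states through which the path passes \emph{before} reaching $C$. First I would unfold the hypothesis: by clause~1 of Definition~\ref{sat}, the assumption $T\vDash A\rhd_B C$ yields a strategy $s$ with $MaxPath_s(A)\subseteq Until(B,C)$. The goal, again by Definition~\ref{sat}, is to exhibit a strategy witnessing $MaxPath_s(A)\subseteq Until(B\setminus C,C)$, and the natural candidate is the same $s$.

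The key observation is that weakening the "staying in" set from $B$ to $B\setminus C$ only \emph{enlarges} the corresponding $Until$ set. This is precisely statement~7 of Lemma~\ref{until lemma}, which gives $Until(B,C)\subseteq Until(B\setminus C,C)$. Chaining this with the hypothesis produces
\[
MaxPath_s(A)\subseteq Until(B,C)\subseteq Until(B\setminus C,C),
\]
so the same strategy $s$ witnesses the conclusion, and $T\vDash A\rhd_{B\setminus C}C$ follows by Definition~\ref{sat}.

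I do not expect any real obstacle here: the soundness of the Early Bird axiom reduces entirely to the monotonicity property already packaged as Lemma~\ref{until lemma}(7). Intuitively, that lemma records the fact that a sequence belonging to $Until(B,C)$ visits states with views in $B$ only up to the first index $k_0$ with $o(w_{k_0})\in C$; at every earlier index the view lies in $B$ but \emph{not} in $C$ (else $k_0$ would not be minimal), hence in $B\setminus C$, so the sequence already lies in $Until(B\setminus C,C)$. The only thing worth double-checking is that no new strategy is required, which is immediate because the subscript plays no role in the choice of $s$; the strategy obtained from the hypothesis is transferred verbatim.
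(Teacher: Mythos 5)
Your proof is correct and matches the paper's own argument: both keep the same witnessing strategy $s$ and conclude via the inclusion $Until(B,C)\subseteq Until(B\setminus C,C)$ from Lemma~\ref{until lemma}(7). Your added justification of that inclusion (taking the minimal index landing in $C$) is a sound elaboration of what the paper leaves implicit.
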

\begin{proof}
Let $T\vDash A\rhd_B C$.
Thus, $MaxPath_s(A)\subseteq Until(B,C)$ by Definition~\ref{sat}. Hence, 
$MaxPath_s(A)\subseteq Until(B,C)\subseteq Until(B\setminus C,C)$, by Lemma~\ref{until lemma}.
Therefore, $T\vDash A\rhd_{B\setminus C} C$ by Definition~\ref{sat}.
\end{proof}

\begin{lemma}
If $T\vDash A\rhd_\varnothing B$, then $T\vDash (A\setminus B) \rhd_\varnothing \varnothing$.
\end{lemma}
\begin{proof} By Definition~\ref{sat}, the assumption $T\vDash A\rhd_\varnothing B$ implies that there is a strategy $s$ such that $MaxPath_s(A)\subseteq Until(\varnothing, B)$. Thus, by Lemma~\ref{until lemma},
\begin{equation*}
    MaxPath_s(A\setminus B)\subseteq MaxPath_s(A)\subseteq Until(\varnothing,B).
\end{equation*}
At the same time, by Lemma~\ref{until lemma},
\begin{equation*}
    MaxPath_s(A\setminus B)\subseteq MaxPath_s(V\setminus B) \subseteq Until(\varnothing, V\setminus B).
\end{equation*}
Thus, by Lemma~\ref{until lemma},
\begin{eqnarray*}
MaxPath_s(A\setminus B)&\subseteq& Until(\varnothing,B) \cap Until(\varnothing, V\setminus B)\\
&=& Until(\varnothing,B\cap (V\setminus B))\\
&=& Until(\varnothing,\varnothing).
\end{eqnarray*}
Therefore, $T\vDash (A\setminus B) \rhd_\varnothing \varnothing$ by Definition~\ref{sat}.
\end{proof}

\begin{lemma}
If $T\vDash A\rhd_B \varnothing$, then $T\vDash A\rhd_\varnothing \varnothing$.
\end{lemma}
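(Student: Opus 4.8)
The plan is to unfold both the hypothesis and the conclusion through Definition~\ref{sat} and then observe that the \emph{destination} set is empty on both sides, so that the fourth clause of Lemma~\ref{until lemma} carries essentially all of the argument. First I would apply Definition~\ref{sat} to the assumption $T\vDash A\rhd_B\varnothing$ to extract a strategy $s$ satisfying $MaxPath_s(A)\subseteq Until(B,\varnothing)$.

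Next I would invoke Lemma~\ref{until lemma}(4), instantiated with $B$ in the role of the first argument, to obtain $Until(B,\varnothing)=\varnothing$. Combining this equality with the inclusion from the previous step forces $MaxPath_s(A)=\varnothing$. Intuitively, a maximal path that witnesses navigation to the empty destination would have to reach a state whose view lies in $\varnothing$, which is impossible; hence no maximal path starting in $A$ can exist at all, and this can only happen when every view in $A$ is unobserved.

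Finally I would establish the conclusion by reusing the \emph{same} strategy $s$. Applying Lemma~\ref{until lemma}(4) once more, now with the empty set, yields $Until(\varnothing,\varnothing)=\varnothing$, so the equality $MaxPath_s(A)=\varnothing$ already established gives $MaxPath_s(A)\subseteq Until(\varnothing,\varnothing)$, and Definition~\ref{sat} then delivers $T\vDash A\rhd_\varnothing\varnothing$. I do not anticipate a genuine obstacle; the only point requiring care is recognizing that emptiness of the destination set collapses the restricted-navigability statement to the assertion that $A$ contains no observed views, a property that is insensitive to the subscript and therefore transfers unchanged from $B$ to $\varnothing$ without modifying the strategy.
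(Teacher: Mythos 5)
Your proposal is correct and follows essentially the same route as the paper: extract a strategy $s$ with $MaxPath_s(A)\subseteq Until(B,\varnothing)$, use item 4 of Lemma~\ref{until lemma} to conclude $MaxPath_s(A)=\varnothing\subseteq Until(\varnothing,\varnothing)$, and apply Definition~\ref{sat}. The aside about $A$ containing no observed views is true but unnecessary for the argument.
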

\begin{proof}
By Definition~\ref{sat}, the assumption $T\vDash A\rhd_B \varnothing$ implies that there is a strategy $s$ such that $MaxPath_s(A)\subseteq Until(B, \varnothing)$. Thus, by Lemma~\ref{until lemma},
$$
MaxPath_s(A)\subseteq Until(B,\varnothing)=\varnothing\subseteq Until(\varnothing,\varnothing).
$$
Therefore, $T\vDash A\rhd_\varnothing \varnothing$ by Definition~\ref{sat}.
\end{proof}

We end the section by stating the soundness theorem for our logical system. The theorem follows from the soundness of the individual axioms shown in the lemmas above.

\begin{theorem}
If $\vdash \phi$, then $T\vDash\phi$ for each epistemic transition system $T$.
\end{theorem}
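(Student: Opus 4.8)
The plan is to proceed by induction on the structure of the derivation of $\phi$. In our system a formula is derivable if it is a propositional tautology in the language $\Phi$, an instance of one of the six axioms, or obtained from previously derived formulae by a single application of Modus Ponens. Accordingly, the argument splits into three kinds of cases, and the bulk of the work has already been carried out in the preceding lemmas of this section.

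First I would dispatch the case in which $\phi$ is an instance of one of the axioms. For each of the six axiom schemes the soundness has already been established as a separate statement: the Reflexivity and the Augmentation lemmas, Lemma~\ref{transitivity sound} for Transitivity, the Early Bird lemma, the Trivial Path lemma, and the Path to Nowhere lemma. Thus every such case is closed immediately by citing the corresponding lemma, and no new computation is required here.

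Next I would treat the case in which $\phi$ is a propositional tautology. The point to make explicit is that parts 2 and 3 of Definition~\ref{sat} interpret $\neg$ and $\to$ by the classical Boolean truth tables. Hence the assignment that sends each atomic formula $A\rhd_B C$ to the truth value of $T\vDash A\rhd_B C$ determines a Boolean valuation of the whole of $\phi$ under which the computed truth value coincides with satisfaction by $T$. Since $\phi$ is a tautology, it evaluates to true under every Boolean valuation, and in particular under this one; therefore $T\vDash\phi$. Finally, for the Modus Ponens step, if $\phi$ is inferred from $\psi$ and $\psi\to\phi$, then by the induction hypothesis $T\vDash\psi$ and $T\vDash\psi\to\phi$; by part 3 of Definition~\ref{sat} the latter means $T\nvDash\psi$ or $T\vDash\phi$, which together with $T\vDash\psi$ forces $T\vDash\phi$.

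I expect no genuine obstacle in this theorem, since all the analytically demanding work, in particular the strategy-splicing argument behind Transitivity built on Lemma~\ref{switch s}, is already complete in the earlier lemmas. The only point requiring mild care is the tautology case, where one must be explicit that the connectives of Definition~\ref{sat} are the classical ones, so that the purely syntactic notion of tautology in $\Phi$ transfers to satisfaction by an arbitrary epistemic transition system $T$.
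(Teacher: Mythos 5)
Your proof is correct and follows the same route as the paper, which simply states that the theorem follows from the soundness lemmas for the individual axioms; you merely spell out the standard induction on the derivation (axiom instances via the preceding lemmas, tautologies via the classical clauses of Definition~\ref{sat}, and Modus Ponens), all of which the paper leaves implicit.
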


\section{Completeness}\label{completeness section}

Suppose that set $X$ is a maximal consistent set of formulae in the language $\Phi$. In this section we define a canonical epistemic transition system $T(X)=(S,o,I,\{\to_i\}_{i\in I})$ based on set $X$.

\subsection{Valid Views}

Depending on the set $X$, there might be some views in set $V$ for which there are no states in the canonical epistemic transition system on which the observation function is equal to these views. Such views correspond to empty classes as discussed in Section~\ref{semantics section}. We intend to define the canonical epistemic transition system $T(X)$ in such a way that a view $v$ has no states in this system if and only if $X\vdash v\rhd_\varnothing \varnothing$. The rest of the views are referred to as {\em valid} views.

\begin{definition}\label{valid}
$Valid=\{v\in V\;|\; X\nvdash v\rhd_\varnothing\varnothing\}$.
\end{definition}

Below we prove several properties of valid views that are used later in the proof of the completeness.

\begin{lemma}\label{little induction}
$X\vdash \{b_1,\dots,b_n\}\rhd_\varnothing\varnothing$,
for each integer $n\ge 0$ and all views $b_1,\dots,b_n\in V\setminus Valid$.
\end{lemma}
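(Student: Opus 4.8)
The plan is to proceed by induction on $n$. The key observation is that, by Definition~\ref{valid}, every view $b_i\in V\setminus Valid$ already satisfies $X\vdash \{b_i\}\rhd_\varnothing\varnothing$; the lemma asserts that these single-view facts can be assembled into one statement about the whole set.

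For the base case $n=0$ the set $\{b_1,\dots,b_n\}$ is empty, so the claim reduces to $X\vdash\varnothing\rhd_\varnothing\varnothing$, which is an instance of the Reflexivity axiom since $\varnothing\subseteq\varnothing$. For the inductive step I would assume $X\vdash\{b_1,\dots,b_{n-1}\}\rhd_\varnothing\varnothing$, which is legitimate because $b_1,\dots,b_{n-1}$ are again members of $V\setminus Valid$. Applying the Augmentation axiom with $D=\{b_n\}$ then gives
$$X\vdash\{b_1,\dots,b_n\}\rhd_\varnothing\{b_n\}.$$
Since $b_n\in V\setminus Valid$, Definition~\ref{valid} yields $X\vdash\{b_n\}\rhd_\varnothing\varnothing$. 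I would chain these two derivations with the Transitivity axiom instantiated by $A=\{b_1,\dots,b_n\}$, $B=\varnothing$, $C=\{b_n\}$, $D=\varnothing$, $E=\varnothing$; its side condition $B\cap D=\varnothing$ holds trivially, and the resulting conclusion $X\vdash\{b_1,\dots,b_n\}\rhd_{\varnothing\cup\varnothing}\varnothing$, i.e.\ $X\vdash\{b_1,\dots,b_n\}\rhd_\varnothing\varnothing$, completes the induction.

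I do not expect a genuine obstacle here: the argument is a short induction built entirely from Reflexivity, Augmentation, and Transitivity, together with the definition of $Valid$. The only points that warrant a moment of care are verifying the emptiness side condition $B\cap D=\varnothing$ of the Transitivity axiom, which is immediate because both subscripts are taken to be empty, and confirming that the inductive hypothesis applies to the shorter list $b_1,\dots,b_{n-1}$, which it does since every listed view lies in $V\setminus Valid$.
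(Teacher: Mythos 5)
Your proof is correct and follows essentially the same route as the paper's own argument: induction on $n$ with the base case handled by Reflexivity, and the inductive step obtained by Augmentation (adding $b_n$ to both sides) followed by Transitivity with $X\vdash b_n\rhd_\varnothing\varnothing$ from Definition~\ref{valid}. Your explicit check of the side condition $B\cap D=\varnothing$ is a welcome extra bit of care but does not change the substance.
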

\begin{proof}
We prove this lemma by induction on $n$. In the case $n=0$, we need to show that $X\vdash \varnothing \rhd_\varnothing \varnothing$, which is true by the Reflexivity axiom.

Suppose that $X\vdash\{b_1,\dots,b_{n-1}\}\rhd_\varnothing \varnothing$. Thus, by the Augmentation axiom, $X\vdash\{b_1,\dots,b_{n-1},b_n\}\rhd_\varnothing b_n$. On the other hand, the assumption $b_n\in V\setminus Valid$ by Definition~\ref{valid} implies $X\vdash b_n\rhd_\varnothing \varnothing$. Therefore, $X\vdash\{b_1,\dots,b_{n-1},b_n\}\rhd_\varnothing \varnothing$ by the Transitivity axiom.
\end{proof}

\begin{lemma}\label{cap valid}
$X\vdash A\rhd_B C\to A\rhd_{B\cap Valid} C$.
\end{lemma}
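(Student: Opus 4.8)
The plan is to reduce the statement to Lemma~\ref{remove void}, which already lets us delete from the subscript any set of views that provably navigates from itself to the empty set through the empty set. The set we wish to delete is precisely the collection of invalid views occurring in $B$, namely $B\setminus Valid$.

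First I would observe that $B\setminus Valid\subseteq V\setminus Valid$, so every view in $B\setminus Valid$ is invalid. Since the set of views $V$ is finite, the set $B\setminus Valid$ is a finite set of invalid views, say $B\setminus Valid=\{b_1,\dots,b_n\}$ with each $b_i\in V\setminus Valid$. Lemma~\ref{little induction} then applies directly and yields
$$
X\vdash (B\setminus Valid)\rhd_\varnothing\varnothing.
$$

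Next I would instantiate Lemma~\ref{remove void} with $B'=B\setminus Valid$, giving
$$
\vdash (B\setminus Valid)\rhd_\varnothing\varnothing\to\bigl(A\rhd_B C\to A\rhd_{B\setminus(B\setminus Valid)} C\bigr).
$$
A routine set computation shows $B\setminus(B\setminus Valid)=B\cap Valid$: an element lies in the left-hand side exactly when it is in $B$ and not in $B\setminus Valid$, i.e.\ in $B$ and in $Valid$. Combining the displayed implication with $X\vdash (B\setminus Valid)\rhd_\varnothing\varnothing$ from the previous step via Modus Ponens delivers $X\vdash A\rhd_B C\to A\rhd_{B\cap Valid} C$, as required.

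The only real content here is lining up the right instance of Lemma~\ref{remove void} and recognizing that $B\setminus Valid$ is a finite family of invalid views so that Lemma~\ref{little induction} is applicable; the finiteness of $V$ is what makes the induction in Lemma~\ref{little induction} usable. I expect the set-theoretic identity $B\setminus(B\setminus Valid)=B\cap Valid$ to be the easiest part to get wrong on paper, but it poses no genuine obstacle. No appeal to soundness or semantics is needed, as the whole argument is syntactic.
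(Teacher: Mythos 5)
Your argument is correct and is essentially identical to the paper's own proof: both apply Lemma~\ref{little induction} to obtain $X\vdash (B\setminus Valid)\rhd_\varnothing\varnothing$, then instantiate Lemma~\ref{remove void} with $B'=B\setminus Valid$ and use the identity $B\setminus(B\setminus Valid)=B\cap Valid$. Nothing is missing.
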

\begin{proof}
Lemma~\ref{little induction} implies that $X\vdash (B\setminus Valid) \rhd_\varnothing\varnothing$. Hence,  by Lemma~\ref{remove void}, 
$$X\vdash A\rhd_B C\to A\rhd_{B\setminus (B\setminus Valid)}C.$$ In other words, $X\vdash A\rhd_B C\to A\rhd_{B\cap Valid}C$.
\end{proof}

\begin{lemma}\label{cap valid right}
$X\vdash A\rhd_B C\to A\rhd_{B} (C \cap Valid)$.
\end{lemma}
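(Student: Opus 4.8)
The plan is to exploit the fact that every view in $C\setminus Valid$ is empty, so that reaching $C$ is provably the same as reaching $C\cap Valid$. The key observation is the set identity $C=(C\setminus Valid)\cup(C\cap Valid)$, which lets me rebuild $C$ out of its invalid and its valid parts. This parallels Lemma~\ref{cap valid}, but instead of stripping invalid views from the subscript $B$, I want to strip them from the destination $C$.

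First I would invoke Lemma~\ref{little induction} on the views of $C\setminus Valid$ to obtain $X\vdash (C\setminus Valid)\rhd_\varnothing\varnothing$. Next I would feed this into the Augmentation axiom, augmenting both sides by the set $C\cap Valid$; this yields $X\vdash \big((C\setminus Valid)\cup(C\cap Valid)\big)\rhd_\varnothing \big(\varnothing\cup(C\cap Valid)\big)$, which by the set identity above is exactly $X\vdash C\rhd_\varnothing (C\cap Valid)$.

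Finally I would chain this with the hypothesis $A\rhd_B C$ using the Transitivity axiom in the degenerate case $D=\varnothing$, so that the side condition $B\cap D=\varnothing$ holds automatically and $B\cup D=B$. Transitivity gives the instance $A\rhd_B C\to\big(C\rhd_\varnothing(C\cap Valid)\to A\rhd_B(C\cap Valid)\big)$, and combining this with the already derived $X\vdash C\rhd_\varnothing(C\cap Valid)$ by the laws of propositional reasoning produces $X\vdash A\rhd_B C\to A\rhd_B(C\cap Valid)$, as required.

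I do not anticipate a genuine obstacle: the argument is short and entirely in the style of the earlier derivations. The only points needing a moment of care are recognizing that Augmentation is the right tool for moving the empty classes of $C\setminus Valid$ \emph{into} the destination (building $C\rhd_\varnothing(C\cap Valid)$ from an empty-to-empty statement), and checking that Transitivity with $D=\varnothing$ leaves the subscript $B$ untouched so that no further use of Lemma~\ref{add down} or Lemma~\ref{remove void} is needed.
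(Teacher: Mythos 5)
Your proposal is correct and matches the paper's own proof step for step: Lemma~\ref{little induction} gives $(C\setminus Valid)\rhd_\varnothing\varnothing$, Augmentation by $C\cap Valid$ gives $C\rhd_\varnothing(C\cap Valid)$, and Transitivity with $D=\varnothing$ chains this onto $A\rhd_B C$. You have merely spelled out the set identity and the side condition that the paper leaves implicit.
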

\begin{proof}
Lemma~\ref{little induction} implies that $X\vdash (C\setminus Valid) \rhd_\varnothing\varnothing$. Thus, by the Augmentation axiom, $X\vdash C \rhd_\varnothing(C\cap Valid)$. At the same time,
$$
A\rhd_B C \to(C \rhd_\varnothing(C\cap Valid)\to A\rhd_{B} (C \cap Valid))
$$
is an instance of the Transitivity axiom. Therefore,  by the laws of propositional reasoning, $X\vdash A\rhd_B C\to A\rhd_{B} (C \cap Valid)$.
\end{proof}

\begin{lemma}\label{A' is B}
If $X\vdash A\rhd_B C$, then $(A\setminus C)\cap Valid  \subseteq B$.
\end{lemma}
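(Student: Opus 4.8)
The plan is to show that the complement set $Q=(A\setminus C)\setminus B$ contains no valid views, since $Q\cap Valid=\varnothing$ is exactly equivalent to the desired conclusion $(A\setminus C)\cap Valid\subseteq B$. To this end I would prove the purely syntactic statement $X\vdash Q\rhd_\varnothing\varnothing$ and then invoke Definition~\ref{valid}: by Lemma~\ref{remove left}, $X\vdash Q\rhd_\varnothing\varnothing$ yields $X\vdash\{v\}\rhd_\varnothing\varnothing$ for every $v\in Q$, so no $v\in Q$ is valid, whence $((A\setminus C)\cap Valid)\setminus B=\varnothing$.

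To derive $X\vdash Q\rhd_\varnothing\varnothing$, I would start from the hypothesis $X\vdash A\rhd_B C$ and note that $Q\subseteq A$, so Lemma~\ref{remove left} gives $X\vdash Q\rhd_B C$. The set $Q$ is by construction disjoint from both $C$ and $B$, and this disjointness is what drives the argument. The key manipulation is to fold the entire middle set $B$ into both the source and the target by the Augmentation axiom, obtaining $X\vdash(Q\cup B)\rhd_B(C\cup B)$. Since $B\subseteq C\cup B$, the Early Bird axiom then collapses the middle set to $\varnothing$, because $B\setminus(C\cup B)=\varnothing$, yielding $X\vdash(Q\cup B)\rhd_\varnothing(C\cup B)$.

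At this point the Trivial Path axiom applies to an assertion with an empty middle, giving $X\vdash\big((Q\cup B)\setminus(C\cup B)\big)\rhd_\varnothing\varnothing$. The final step is the set-theoretic simplification $(Q\cup B)\setminus(C\cup B)=Q$: the $B$-part is removed because $B\subseteq C\cup B$, and the remaining $Q\setminus(C\cup B)$ equals $Q$ precisely because $Q\cap C=\varnothing$ and $Q\cap B=\varnothing$. Hence $X\vdash Q\rhd_\varnothing\varnothing$, completing the chain.

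The main obstacle I anticipate is eliminating the middle set $B$, which in general contains valid views and therefore cannot be erased by Lemma~\ref{remove void} or reduced directly through Transitivity (which only enlarges the middle). The resolution, and the crux of the proof, is the observation that one need not shrink $B$ in place: by Augmentation one pushes $B$ simultaneously into the start set and the target, after which Early Bird legitimately empties the middle and Trivial Path strips everything in $C\cup B$ off the source, leaving exactly the disjoint residue $Q$. The disjointness conditions $Q\cap C=\varnothing$ and $Q\cap B=\varnothing$ are used only in this final bookkeeping, which is routine.
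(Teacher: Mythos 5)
Your proof is correct and follows essentially the same route as the paper's: absorb $B$ into the target set so that the Early Bird axiom empties the subscript, then apply Trivial Path and Definition~\ref{valid} to conclude the leftover views are not valid. The only (cosmetic) differences are that the paper argues by contradiction on a single view $v\in(A\setminus C)\cap Valid$ with $v\notin B$ and uses Lemma~\ref{add right} to enlarge the target to $B\cup C$, whereas you work with the whole residue $Q=(A\setminus C)\setminus B$ at once and use Augmentation for the same purpose.
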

\begin{proof}
Suppose that there is $v\in (A\setminus C)\cap Valid$ such that $v\notin B$. Thus, $v\in A$, $v\notin C$, $v\in Void$, and $v\notin B$. 

Recall that $X\vdash A\rhd_B C$ by the assumption of the lemma.  Hence, $X\vdash v\rhd_B C$ by Lemma~\ref{remove left} and due to $v\in A$. Thus, $X\vdash v\rhd_B B\cup C$ by Lemma~\ref{add right}. Then, $X\vdash v\rhd_{B\setminus (B\cup C)} B\cup C$ by the Early Bird axiom. In other words, $X\vdash v\rhd_{\varnothing} B\cup C$. Thus, $X\vdash v\rhd_{\varnothing} \varnothing$ by the Trivial Path axiom. Hence, $v\notin Valid$ by Definition~\ref{valid}, which contradicts the choice of view $v$.
\end{proof}

\subsection{Instructions}

For each formula $A\rhd_B C\in X$ our canonical epistemic transition system $T(X)$  will have a strategy to navigate from set $A$ to set $C$ through set $B$. Generally speaking, this strategy will use a dedicated instruction associated with formula $A\rhd_B C$. Formally, the set of all instructions is defined as a set of triples $(A,B,C)$ satisfying the three properties listed below:

\begin{definition}\label{canonical I}
Let $I$ be the set of all triples $(A,B,C)$ such that
\begin{enumerate}
    \item $A,B,C\subseteq Valid$,
    \item $X\vdash A\rhd_{A\cup B} C$,
    \item sets $A$, $B$, and $C$ are pairwise disjoint.
\end{enumerate}
\end{definition}

Note that technically instruction $(A,B,C)$ is associated not with formula $A\rhd_B C$, but rather with formula $A\rhd_{A\cup B} C$. This is done in order to be able to assume that sets $A$, $B$, and $C$ are pairwise disjoint. The next lemma is an general property of sets, which is used later.

\begin{lemma}\label{set lemma}
$(A\cup B)\setminus C= (A\setminus C)\cup (B\setminus (A\cup C))$.
\end{lemma}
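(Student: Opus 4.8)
The plan is to establish this elementary set-theoretic identity between the left-hand side $L=(A\cup B)\setminus C$ and the right-hand side $R=(A\setminus C)\cup(B\setminus(A\cup C))$ by proving the two inclusions $L\subseteq R$ and $R\subseteq L$ via element-chasing.

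For the inclusion $L\subseteq R$, I would take an arbitrary $x\in L$, so that $x\in A\cup B$ and $x\notin C$, and split into two cases according to whether $x\in A$. If $x\in A$, then $x\in A$ and $x\notin C$, hence $x\in A\setminus C\subseteq R$. Otherwise $x\notin A$, and since $x\in A\cup B$ we get $x\in B$; together with $x\notin A$ and $x\notin C$ this yields $x\in B\setminus(A\cup C)\subseteq R$. For the reverse inclusion $R\subseteq L$, I would take $x\in R$ and again split along the two parts defining $R$. If $x\in A\setminus C$, then $x\in A\subseteq A\cup B$ and $x\notin C$, so $x\in L$. If instead $x\in B\setminus(A\cup C)$, then $x\in B\subseteq A\cup B$, and $x\notin A\cup C$ gives in particular $x\notin C$, so again $x\in L$.

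Since the statement is a routine identity, there is no genuine obstacle; the only point needing a moment of care is unpacking $x\notin A\cup C$ into the two conditions $x\notin A$ and $x\notin C$ in the second case of each direction. Alternatively, I could avoid element-chasing and argue algebraically: first rewrite $B\setminus(A\cup C)$ as $(B\setminus A)\setminus C$, then use the fact that set difference distributes over union, namely $(X\cup Y)\setminus C=(X\setminus C)\cup(Y\setminus C)$ with $X=A$ and $Y=B\setminus A$, to collapse $R$ to $(A\cup(B\setminus A))\setminus C$, and finally apply the absorption identity $A\cup(B\setminus A)=A\cup B$ to obtain $(A\cup B)\setminus C=L$.
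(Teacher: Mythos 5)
Your proof is correct. Your primary argument is a two-inclusion element chase, whereas the paper dispatches the identity in a single algebraic chain: it rewrites $A\cup B$ as $A\cup(B\setminus A)$, distributes $\setminus C$ over the union, and then collapses $(B\setminus A)\setminus C$ to $B\setminus(A\cup C)$. The alternative you sketch at the end is exactly this computation read in the reverse direction, so in substance you have both the paper's proof and a more pedestrian verification of it. For a lemma this elementary the two routes buy essentially the same thing; the element chase is marginally more self-contained (it does not presuppose the distributivity and absorption identities), while the algebraic version is shorter and matches how the lemma is actually invoked later, namely as a rewriting step inside a derivation. No gaps.
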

\begin{proof}
\begin{eqnarray*}
(A\cup B)\setminus C &=& (A\cup (B\setminus A))\setminus C =
(A\setminus C)\cup ((B\setminus A)\setminus C)\\
&=&(A\setminus C)\cup(B\setminus (A\cup C))
\end{eqnarray*}
\end{proof}

We stated earlier that for each $A\rhd_B C\in X$ there is a dedicated instruction used by the strategy that navigates from set $A$ to set $C$ through set $B$. In some situations this dedicated instruction could be $(A,B,C)$. However in most cases we would need to slightly modify tuple $(A,B,C)$ into tuple $(A',B',C')$ in order for it to satisfy the three conditions from Definition~\ref{canonical I}. The next lemma specifies $(A',B',C')$ in terms of $(A,B,C)$ and proves that $(A',B',C')$ is an instruction.

\begin{lemma}\label{define prime}
If $X\vdash A\rhd_B C$, then $(A',B',C')\in I$, where $A'=(A\setminus C)\cap Valid$, $B'=(B\setminus (A\cup C))\cap Valid$, and $C'=C\cap Valid$.
\end{lemma}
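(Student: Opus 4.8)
The plan is to verify directly the three defining conditions of an instruction from Definition~\ref{canonical I} for the triple $(A',B',C')$, where $A'=(A\setminus C)\cap Valid$, $B'=(B\setminus(A\cup C))\cap Valid$, and $C'=C\cap Valid$. The first condition, that $A',B',C'\subseteq Valid$, is immediate since each of the three sets is explicitly intersected with $Valid$ in its definition. The third condition, pairwise disjointness, is pure set theory: $A'\cap C'=\varnothing$ because $(A\setminus C)\cap C=\varnothing$; $B'\cap C'=\varnothing$ because $(B\setminus(A\cup C))\cap C=\varnothing$; and $A'\cap B'=\varnothing$ because $A'\subseteq A$ while $B'\subseteq B\setminus(A\cup C)$ is disjoint from $A$.

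The substance of the lemma is the second condition, namely $X\vdash A'\rhd_{A'\cup B'}C'$. The first thing I would record is the set identity
\[
A'\cup B'=\big((A\cup B)\setminus C\big)\cap Valid,
\]
which follows by applying Lemma~\ref{set lemma} to $(A\setminus C)\cup(B\setminus(A\cup C))$ and then intersecting with $Valid$. This identity tells me precisely which subscript to aim for, and it is the reason the derivation must first bring all of $A$ into the restricting set before deleting $C$.

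With the target subscript identified, I would derive the second condition from the hypothesis $X\vdash A\rhd_B C$ by a short chain of the already-established derivation lemmas, transforming one component at a time:
\begin{enumerate}
    \item grow the subscript from $B$ to $A\cup B$ using Lemma~\ref{add down} (since $B\subseteq A\cup B$), obtaining $X\vdash A\rhd_{A\cup B}C$;
    \item delete $C$ from the subscript by the Early Bird axiom, obtaining $X\vdash A\rhd_{(A\cup B)\setminus C}C$;
    \item intersect the subscript with $Valid$ by Lemma~\ref{cap valid}, which by the displayed identity gives $X\vdash A\rhd_{A'\cup B'}C$;
    \item intersect the right-hand set with $Valid$ by Lemma~\ref{cap valid right}, obtaining $X\vdash A\rhd_{A'\cup B'}C'$;
    \item finally shrink the left-hand set from $A$ to $A'=(A\setminus C)\cap Valid\subseteq A$ by Lemma~\ref{remove left}, obtaining $X\vdash A'\rhd_{A'\cup B'}C'$.
\end{enumerate}
This is exactly the second condition, so $(A',B',C')\in I$.

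I expect the only real obstacle to be the bookkeeping in the second condition: choosing the order of the transformations so that each lemma applies verbatim and so that the subscript lands on precisely $((A\cup B)\setminus C)\cap Valid$ rather than on some larger or smaller set. In particular, growing the subscript to $A\cup B$ before invoking Early Bird, rather than applying Early Bird to $B$ directly, is essential, since otherwise the part $(A\setminus C)\cap Valid$ of the required subscript $A'\cup B'$ would be missing from the derived formula.
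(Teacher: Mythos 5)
Your proof is correct and follows essentially the same route as the paper's: the same disjointness computations, and the same chain of Lemma~\ref{add down}, the Early Bird axiom, Lemma~\ref{set lemma}, Lemma~\ref{cap valid}, Lemma~\ref{cap valid right}, and Lemma~\ref{remove left} to establish $X\vdash A'\rhd_{A'\cup B'}C'$. The only difference is cosmetic: the paper shrinks the left-hand set to $A\setminus C$ at the start of the chain, whereas you defer the application of Lemma~\ref{remove left} to the final step, which changes nothing of substance.
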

\begin{proof}
By Definition~\ref{canonical I}, it suffices to show that sets $A'$, $B'$, and $C'$ are pairwise disjoint and that $X\vdash A'\rhd_{A'\cup B'} C'$. First, we show that these sets are pairwise disjoint:
\begin{eqnarray*}
A'\cap B'&=&[(A\setminus C)\cap Valid]\cap B'\subseteq A\cap B'\\
&=& A\cap [(B\setminus (A\cup C))\cap Valid]\subseteq A\cap [B\setminus A]=\varnothing,\\[2mm]
A'\cap C'&=&[(A\setminus C)\cap Valid]\cap C'\subseteq [A\setminus C]\cap C'\\
&=& [A\setminus C]\cap [C\cap Valid]\subseteq [A\setminus C]\cap C=\varnothing,\\[2mm]
B'\cap C'&=& [(B\setminus (A\cup C))\cap Valid]\cap C'\subseteq [B\setminus C] \cap C'\\
&=& [B\setminus C] \cap [C\cap Valid] \subseteq [B\setminus C] \cap C =\varnothing.
\end{eqnarray*}
Next, we show that $X\vdash A'\rhd_{A'\cup B'} C'$. Indeed, the assumption $X\vdash A\rhd_B C$ implies $X\vdash A\setminus C\rhd_B C$ by Lemma~\ref{remove left}. Hence, $X\vdash A\setminus C\rhd_{A\cup B} C$ by Lemma~\ref{add down}. Thus, $X\vdash A\setminus C\rhd_{(A\cup B)\setminus C} C$ by the Early Bird axiom. Then, by Lemma~\ref{set lemma}, 
$$X\vdash A\setminus C\rhd_{(A\setminus C)\cup (B\setminus (A\cup C))} C.$$ 
Thus, by Lemma~\ref{remove left},
$$X\vdash (A\setminus C)\cap Valid\rhd_{(A\setminus C)\cup (B\setminus (A\cup C))} C.$$
Hence, by Lemma~\ref{cap valid},
$$X\vdash (A\setminus C)\cap Valid\rhd_{((A\setminus C)\cup (B\setminus (A\cup C)))\cap Valid} C.$$
Thus, by Lemma~\ref{cap valid right},
$$X\vdash (A\setminus C)\cap Valid\rhd_{((A\setminus C)\cup (B\setminus (A\cup C)))\cap Valid} C\cap Valid.$$
Therefore, $X\vdash A'\rhd_{A'\cup B'} C'$ by the choice of sets $A'$, $B'$, and $C'$.
\end{proof}

Informally, the next lemma states that if there is an instruction to navigate from a set $A$ to an empty set, then set $A$ must be empty.

\begin{lemma}\label{C empty A empty}
For any $(A,B,C)\in I$ if $C=\varnothing$, then $A=\varnothing$.
\end{lemma}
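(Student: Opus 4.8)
The plan is to reduce the statement to the defining property of the set $Valid$ by means of the Path to Nowhere axiom, and then to finish by a one-element argument using Lemma~\ref{remove left}.

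First I would assume $C=\varnothing$. By the second condition of Definition~\ref{canonical I}, membership $(A,B,C)\in I$ gives $X\vdash A\rhd_{A\cup B}\varnothing$. Applying the Path to Nowhere axiom with the set $A\cup B$ in the role of the subscript, I obtain $X\vdash A\rhd_\varnothing\varnothing$. This is the key step: it discards all information about the route and leaves a pure ``navigation to nowhere'' statement about the set $A$.

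Next I would show $A=\varnothing$ by contradiction. Suppose some view $v$ lies in $A$. The first condition of Definition~\ref{canonical I} gives $A\subseteq Valid$, hence $v\in Valid$, which by Definition~\ref{valid} means $X\nvdash v\rhd_\varnothing\varnothing$. On the other hand, since $\{v\}\subseteq A$, Lemma~\ref{remove left} applied to $X\vdash A\rhd_\varnothing\varnothing$ yields $X\vdash v\rhd_\varnothing\varnothing$. This contradicts the choice of $v$, so no such $v$ can exist and therefore $A=\varnothing$.

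I do not expect a genuine obstacle here; the argument is short and purely syntactic. The only points requiring care are to match the axiom correctly, namely to read the subscript $A\cup B$ (not a bare $B$) as the hypothesis of Path to Nowhere, and to recall that membership in $Valid$ is by definition exactly the \emph{unprovability} of the singleton ``path to nowhere'' formula $v\rhd_\varnothing\varnothing$, which is precisely what Lemma~\ref{remove left} allows us to contradict.
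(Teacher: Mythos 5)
Your proposal is correct and follows essentially the same route as the paper's own proof: extract $X\vdash A\rhd_{A\cup B}\varnothing$ from Definition~\ref{canonical I}, apply the Path to Nowhere axiom to get $X\vdash A\rhd_\varnothing\varnothing$, then use Lemma~\ref{remove left} on a hypothetical element of $A$ to contradict $A\subseteq Valid$ via Definition~\ref{valid}. No gaps.
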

\begin{proof}
The assumption $(A,B,C)\in I$ implies that $X\vdash A\rhd_{A\cup B} C$, by Definition~\ref{canonical I}. Thus, $X\vdash A\rhd_{A\cup B} \varnothing$ due to the assumption $C=\varnothing$. Hence, $X\vdash A\rhd_\varnothing \varnothing$ by the Path to Nowhere axiom. 
Suppose that there is a view $a\in A$. Hence,  $X\vdash a\rhd_\varnothing \varnothing$ by Lemma~\ref{remove left}. Thus, $a\notin Valid$ by Definition~\ref{valid}. At the same time, $A\subseteq Valid$ by Definition~\ref{canonical I}. Hence, $a\notin A$, which is a contradiction with the choice of view $a$. 
\end{proof}

\subsection{States and Observation Function}\label{canonical states and observations}

There are two types of states in the canonical epistemic transition system $T(X)$. The first type of states comes from our intention for each $v\in Valid$ to have at least one state $w$ such that $o(w)=v$. Thus, we consider each $v\in Valid$ to be a state of the first type and define the observation function on the states of the first type as $o(v)=v$.

In addition to the states of the first type, the canonical epistemic transition system also has states of the second type. Informally, these are intermediate states representing the result of a partial execution of an instruction. If an instruction $i$ might transition the system from a state $w$ of the first type  to a state $v$ of the first type, then the same instruction also might transition the system into a partial completion state $(u,i)$ of the second type. If the same instruction $i$ is invoked in state $(u,i)$, then the system will finish the transition into state $v$. If an instruction $j\neq i$ is invoked in state  $(u,i)$, then the system abandons the partially completed instruction $i$ and goes into a state prescribed by instruction $j$. The next two definitions formally capture this intuition. Symbol $\sqcup$ represents disjoint union.

\begin{definition}\label{canonical states}
$S=Valid \sqcup (Valid\times I)$.
\end{definition}

\begin{definition}\label{canonical o}
$o(v,B)=v$.
$$
o(w)=
\begin{cases}
w, & \mbox{ if $w\in Valid$},\\
v, & \mbox{ if $w=(v,i)$}.
\end{cases}
$$
\end{definition}

\begin{lemma}\label{o is valid}
$o(w)\in Valid$ for each $w\in S$.
\end{lemma}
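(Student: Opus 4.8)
The plan is to argue directly by case analysis on the two types of states, exploiting the fact that by Definition~\ref{canonical states} the state set decomposes as the disjoint union $S = Valid \sqcup (Valid \times I)$. So any $w \in S$ either lies in $Valid$ or has the form $w = (v,i)$ with $v \in Valid$ and $i \in I$, and in both cases I can read off the value of $o(w)$ from Definition~\ref{canonical o} and check directly that it lands in $Valid$.

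First I would treat the case $w \in Valid$. Here Definition~\ref{canonical o} gives $o(w) = w$, and since $w \in Valid$ by the case hypothesis, we immediately get $o(w) \in Valid$.

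Next I would treat the case $w = (v,i)$ for some $v \in Valid$ and $i \in I$. The crucial observation is that, because $S = Valid \sqcup (Valid \times I)$, the first component of any such pair is forced to lie in $Valid$; that is, $v \in Valid$. Definition~\ref{canonical o} then yields $o(w) = o(v,i) = v$, which is in $Valid$. Combining the two cases establishes the claim for every $w \in S$.

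I do not anticipate a genuine obstacle here: the lemma is an immediate consequence of the definitions, and the only point requiring any care is to record explicitly that the disjoint-union form of $S$ guarantees the first coordinate of a second-type state is already a valid view, so that no appeal to the axioms or to the set $X$ is needed.
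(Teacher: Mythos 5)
Your proof is correct and takes essentially the same route as the paper, which simply states that the lemma follows from Definition~\ref{canonical states} and Definition~\ref{canonical o}; your two-case analysis merely spells out that one-line argument explicitly. No issues.
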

\begin{proof}
The statement of the lemma follows from Definition~\ref{canonical states} and Definition~\ref{canonical o}.
\end{proof}

\subsection{Transitions}

Recall that the set of states of a canonical transition model is equal to the disjoint union $Valid \sqcup (Valid\times I)$. We refer to a state as having type one if it belongs to set $Void$ and type two if it belongs to set $Valid\times I$. 

\begin{figure}[ht]
\begin{center}
\scalebox{.7}{\includegraphics{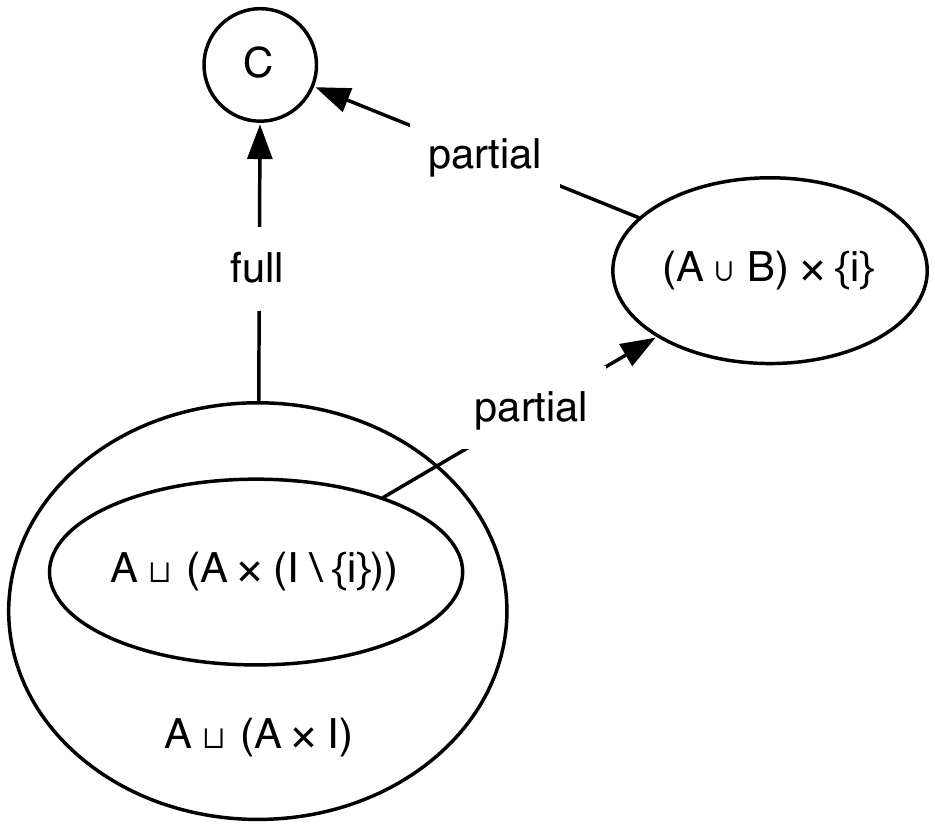}}
\caption{Transitions on instruction $(A,B,C)$}\label{transition figure}
\end{center}
\end{figure}

An instruction $(A,B,C)$ could be used to make one of the following transitions, see Figure~\ref{transition figure}:
\begin{enumerate}
    \item a ``full" transition from any state $w$ such that $o(w)\in A$ to any state $u\in C$ of the first type,
    \item a ``partial" transition from any state $w$ such that $o(w)\in A$ and state $w$ is not a partial completion for transition $(A,B,C)$, to a state $(v,(A,B,C))$ of the second type such that $v\in A\cup B$,
    \item a ``partial" transition from any state $w$ such that $o(w)\in A\cup B$ and  state $w$ is a partial completion for transition $(A,B,C)$, to any state $u\in C$.
\end{enumerate}

The next definition captures the above informal description.

\begin{definition}\label{canonical to}
If $i=(A,B,C)\in I$, then
 \begin{eqnarray*}
\to_{i}&=&\{(a,c)\;|\;a\in A\sqcup (A\times I), c\in C\}\\
            && \cup\;\{(a,b)\;|\;a\in A\sqcup (A\times (I\setminus\{i\})),b\in (A\cup B)\times\{i\}\}\\
            && \cup\; \{(b,c)\;|\;b\in (A\cup B)\times \{i\}, c\in C \}.
\end{eqnarray*}  
\end{definition}

This concludes the definition of the canonical epistemic transition system $T(X)=(S,o,I,\{\to_i\}_{i\in I})$. 
The next two lemmas prove basic properties of the transition relation $\to_i$. These properties are used later in the proof of the completeness.

\begin{lemma}\label{A is not last}
For any strategy $s$, any set $E\subseteq Valid$, any sequence $\pi\in MaxPath_s(E)$, and any element $w$ of $\pi$, if $s(o(w))=(A,B,C)$ and $o(w)\in A$, then $w$ cannot be the last element of sequence $\pi$.
\end{lemma}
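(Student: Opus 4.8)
The plan is to show that $w$ admits an outgoing transition under the instruction $s(o(w))$, which by maximality of $\pi$ forces $w$ not to be its last element. If $\pi$ is infinite there is no last element and the claim is vacuous, so suppose $\pi=w_0,\dots,w_m$ is finite with $w_m=w$. Should a state $u$ with $w\to_{s(o(w))}u$ exist, appending $u$ to $\pi$ still satisfies the two clauses of Definition~\ref{path}: the initial condition $o(w_0)\in E$ is inherited, and the only new transition requirement is exactly $w_m\to_{s(o(w_m))}u$. This produces a strictly longer member of $Path_s(E)$, contradicting $\pi\in MaxPath_s(E)$ per Definition~\ref{maxpath}. Thus the whole task reduces to exhibiting a single transition out of $w$ on the instruction $i:=s(o(w))=(A,B,C)$.

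The first key point is that the target set $C$ of this instruction is nonempty. Since $s$ is a strategy it maps into $I$, so $i=(A,B,C)\in I$. The hypothesis $o(w)\in A$ shows $A\neq\varnothing$, so by the contrapositive of Lemma~\ref{C empty A empty} we obtain $C\neq\varnothing$; fix any $c\in C$.

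Next I would verify that $w$ lies in the source set $A\sqcup(A\times I)$ occurring in the first (``full transition'') clause of Definition~\ref{canonical to}, splitting on the type of $w$. If $w$ is a first-type state then $o(w)=w$ by Definition~\ref{canonical o}, so $o(w)\in A$ gives $w\in A$. If $w=(v,i')$ is a second-type state then $o(w)=v$, so $o(w)\in A$ gives $v\in A$ and hence $w\in A\times I$. In either case $w\in A\sqcup(A\times I)$, and the full-transition clause of Definition~\ref{canonical to} delivers $(w,c)\in\to_i$, i.e.\ $w\to_{s(o(w))}c$. This is precisely the transition needed to run the maximality argument of the first paragraph to its contradiction.

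The step I expect to be the crux is the appeal to Lemma~\ref{C empty A empty}. Without $C\neq\varnothing$ one cannot guarantee a full transition, and this is not a mere technicality: a second-type state $(v,i)$ that is a partial completion for $i=(A,B,C)$ with $C=\varnothing$ would be a genuine dead end, since none of the three clauses of Definition~\ref{canonical to} would fire (clause one and clause three both land in $C$, and clause two does not apply to a partial completion for $i$ itself). It is exactly the hypothesis $o(w)\in A$, fed through Lemma~\ref{C empty A empty}, that excludes this degenerate configuration and lets the uniform full-transition argument cover all types of $w$ at once.
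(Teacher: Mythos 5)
Your proposal is correct and follows essentially the same route as the paper: show $C\neq\varnothing$ via Lemma~\ref{C empty A empty}, observe $w\in A\sqcup(A\times I)$ from Definition~\ref{canonical o}, obtain the full transition $w\to_{(A,B,C)}c$ from Definition~\ref{canonical to}, and conclude by maximality. The extra detail you supply (spelling out the extension argument and the role of the nonemptiness of $C$) is consistent with, and merely elaborates on, the paper's proof.
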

\begin{proof}
By Definition~\ref{canonical o}, the assumption $o(w)\in A$ implies that $w\in A\sqcup (A\times I)$. By Lemma~\ref{C empty A empty}, the same assumption $o(w)\in A$ implies that there is a view $c\in C$. Thus, $w\to_{(A,B,C)}c$ by Definition~\ref{canonical to}. Hence, $w\to_{s(o(w))}c$ due to the assumption $s(o(w))=(A,B,C)$.  Therefore, element $w$ cannot be the last element of sequence $\pi$ by Definition~\ref{maxpath}.
\end{proof}


\begin{lemma}\label{AB is not last}
For any strategy $s$, any set $E\subseteq Valid$, any sequence $\pi\in MaxPath_s(E)$, and any two consecutive elements $w$ and $w'$ of $\pi$ such that 
\begin{enumerate}
    \item $w\in A\sqcup (A \times I)$, 
    \item $w'\in (A\cup B)\times\{(A,B,C)\}$,
    \item $s(o(w'))=(A,B,C)$,
\end{enumerate}
 the sequence $\pi$ contains an element $w''$ immediately after the element $w'$ such that $o(w'')\in C$.
\end{lemma}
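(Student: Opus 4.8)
The plan is to mirror the structure of Lemma~\ref{A is not last}: first guarantee that $w'$ must have a successor in $\pi$, and then read off from Definition~\ref{canonical to} that every possible successor is a state whose observation lies in $C$. Throughout I would use that $\pi\in MaxPath_s(E)\subseteq Path_s(E)$, so any element of $\pi$ and its immediate successor $u,u'$ satisfy $u\to_{s(o(u))}u'$ by Definition~\ref{path}, and that a finite maximal path cannot be extended by Definition~\ref{maxpath}.

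First I would show that $C\neq\varnothing$. Condition~1 gives $w\in A\sqcup(A\times I)$, so $o(w)\in A$ by Definition~\ref{canonical o}, and hence $A\neq\varnothing$. Since $s$ is a strategy, condition~3 gives $(A,B,C)=s(o(w'))\in I$. The contrapositive of Lemma~\ref{C empty A empty} then yields $C\neq\varnothing$, so I may fix some $c\in C$.

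Next I would argue that $w'$ is not the last element of $\pi$, so that $w''$ exists. By condition~2 we have $w'\in(A\cup B)\times\{(A,B,C)\}$, so the third (``finishing'') clause of Definition~\ref{canonical to} gives $w'\to_{(A,B,C)}c$. Using condition~3 this is exactly $w'\to_{s(o(w'))}c$, so by Definition~\ref{maxpath} the element $w'$ cannot be the last element of $\pi$; let $w''$ be the element immediately following it, so that $w'\to_{s(o(w'))}w''$, i.e. $w'\to_{(A,B,C)}w''$.

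Finally I would identify the successor by inspecting the three clauses of Definition~\ref{canonical to} for the instruction $i=(A,B,C)$ with source $w'=(v',(A,B,C))$: the first clause may fire only when $v'\in A$, and then sends $w'$ to some state of $C$; the third clause fires since $v'\in A\cup B$, and likewise sends $w'$ to a state of $C$; and the second clause cannot fire, because its source set $A\sqcup(A\times(I\setminus\{i\}))$ excludes $w'$ precisely because $w'$ already carries the label $i$. Hence every successor of $w'$ under $(A,B,C)$ is a first-type state lying in $C$, so $w''\in C$, and therefore $o(w'')=w''\in C$ by Definition~\ref{canonical o}. I expect the main obstacle to be this last case analysis, and in particular verifying that the ``starting'' partial transition (the second clause) is genuinely unavailable from the partial-completion state $w'$ — this is the step that encodes the intended behaviour that an instruction, once partially executed, is completed rather than restarted.
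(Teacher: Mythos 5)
Your proposal is correct and follows essentially the same route as the paper's own proof: use $o(w)\in A$ together with Lemma~\ref{C empty A empty} to get $C\neq\varnothing$, conclude from the third clause of Definition~\ref{canonical to} that $w'$ has a successor and hence is not the last element of the maximal path, and then observe that every $(A,B,C)$-successor of a state in $(A\cup B)\times\{(A,B,C)\}$ lies in $C$. The only difference is that you spell out the clause-by-clause case analysis (including why the ``restarting'' clause cannot fire from $w'$), which the paper compresses into a single appeal to Definition~\ref{canonical to}.
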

\begin{proof}
By Definition~\ref{canonical o}, assumption $w\in A\sqcup (A \times I)$ implies that $o(w)\in A$. Hence, by Lemma~\ref{C empty A empty}, set $C$ contains at least one element $c$. Note that $w'\to_{(A,B,C)} c$ by Definition~\ref{canonical to} due to the assumption $w'\in (A\cup B)\times\{(A,B,C)\}$ of the lemma. Thus, by Definition~\ref{maxpath}, element $w'$ is not the last element of sequence $\pi$. 

Let $w''$ be the element of sequence $\pi$ that immediately follows element $w'$. Hence, $w'\to_{s(o(w'))} w''$. Thus,  $w'\to_{(A,B,C)} w''$ by the assumption $s(o(w'))=(A,B,C)$ of the lemma. Then, $w''\in C$ by Definition~\ref{canonical to} and due to the assumption $w'\in (A\cup B)\times\{(A,B,C)\}$ of the lemma. Therefore, $o(w'')\in C$ by Definition~\ref{canonical o}.
\end{proof}

\subsection{Provability Implies Satisfiability}

In this section we show if an atomic proposition is provable from set $X$, then it is satisfied in the canonical epistemic transition system. The converse of this statement is shown later in Lemma~\ref{true implies provable}.

\begin{lemma}\label{provable implies true}
If $X\vdash A\rhd_B C$, then $T(X)\vDash A\rhd_B C$.
\end{lemma}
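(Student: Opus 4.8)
The plan is to exhibit a single explicit strategy $s$ witnessing $T(X)\vDash A\rhd_B C$ and then verify that every sequence it produces from $A$ lies in $Until(B,C)$. The natural candidate is supplied by Lemma~\ref{define prime}: since $X\vdash A\rhd_B C$, the triple $i=(A',B',C')$ with $A'=(A\setminus C)\cap Valid$, $B'=(B\setminus(A\cup C))\cap Valid$, and $C'=C\cap Valid$ is an instruction in $I$. I would define the constant strategy $s(v)=i$ for every view $v\in V$ and prove $MaxPath_s(A)\subseteq Until(B,C)$, from which $T(X)\vDash A\rhd_B C$ follows by Definition~\ref{sat}.

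Fix an arbitrary $\pi=w_0,w_1,\dots\in MaxPath_s(A)$ and split on $o(w_0)$. By Lemma~\ref{o is valid} we have $o(w_0)\in Valid$, so $o(w_0)\in A\cap Valid$. If $o(w_0)\in C$, then $\pi\in Until(B,C)$ with $k_0=0$. Otherwise $o(w_0)\in(A\setminus C)\cap Valid=A'$, and Lemma~\ref{A is not last} guarantees that $w_1$ exists. The whole argument then turns on reading the possible first steps off Definition~\ref{canonical to}: because $s$ always invokes $i$ and the sets $A',B',C'$ are pairwise disjoint, a state whose observation is in $A'$ can only move directly into $C'$ (a full transition) or into a partial-completion state $(v,i)$ with $v\in A'\cup B'$ (a partial transition). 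Since $Until(B,C)$ only requires that the \emph{first} occurrence of a $C$-observation be preceded by $B$-observations, I do not even need to argue that $\pi$ terminates; it suffices to locate that first $C$-observation.

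Concretely, in the case $w_1\in C'$ the first-type state $w_1$ satisfies $o(w_1)\in C'\subseteq C$, so I take $k_0=1$; the only earlier observation is $o(w_0)\in A'$, and $A'\subseteq B$ by Lemma~\ref{A' is B}, so $\pi\in Until(B,C)$. In the case $w_1=(v,i)$ with $v\in A'\cup B'$, I apply Lemma~\ref{AB is not last} with $w=w_0\in A'\sqcup(A'\times I)$, $w'=w_1\in(A'\cup B')\times\{i\}$, and $s(o(w_1))=i$, obtaining a successor $w_2$ with $o(w_2)\in C'\subseteq C$; here I take $k_0=2$. The two earlier observations lie in $B$ because $o(w_0)\in A'\subseteq B$ (Lemma~\ref{A' is B}) and $o(w_1)=v\in A'\cup B'\subseteq B$, using $A'\subseteq B$ again together with $B'=(B\setminus(A\cup C))\cap Valid\subseteq B$. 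In every case $\pi\in Until(B,C)$, so $MaxPath_s(A)\subseteq Until(B,C)$ and the lemma follows.

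The main obstacle I anticipate is the bookkeeping between the two kinds of states, first-type views $v\in Valid$ and second-type partial states $(v,i)$, when enumerating the transitions allowed by Definition~\ref{canonical to} from $w_0$; in particular one must confirm that these are exhausted by the two cases above for every admissible form of $w_0$ (first-type, partial completion for $i$, or partial completion for some $j\neq i$). Lemmas~\ref{A is not last} and~\ref{AB is not last} are tailored precisely to absorb this case analysis, guaranteeing respectively that an $A'$-state is never terminal and that a partial-completion state is immediately followed by a $C'$-state; once they are invoked, the only remaining work is the disjointness of $A',B',C'$ and the two containments $A'\subseteq B$ and $B'\subseteq B$.
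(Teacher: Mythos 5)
Your proposal is correct and follows essentially the same route as the paper's own proof: the constant strategy $s(v)=i_0=(A',B',C')$ supplied by Lemma~\ref{define prime}, the reduction to the case $o(w_0)\in A'$ via Lemma~\ref{o is valid}, and the first-step case analysis resolved by Lemmas~\ref{A is not last}, \ref{AB is not last}, and \ref{A' is B}. The only cosmetic difference is that you organize the cases by the type of the successor $w_1$ (merging the paper's Cases I and III, which both have $w_1\in C'$), whereas the paper splits by the three clauses of Definition~\ref{canonical to}.
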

\begin{proof}
Let $i_0=(A',B',C')$, where  $A'=(A\setminus C)\cap Valid$, $B'=(B\setminus (A\cup C))\cap Valid$, and $C'=C\cap Valid$. Thus, $i_0\in I$ by Lemma~\ref{define prime}. Define strategy $s$ to be a constant function such that $s(v)=i_0$ for each view $v\in V$. By Definition~\ref{sat}, it suffices to show that $MaxPath_s(A)\subseteq Until(B,C)$.

Consider any path $\pi=w_0,\dots\in MaxPath_s(A)$. By Definition~\ref{path}, $o(w_0)\in A$. 
Note that if $o(w_0)\in C$, then $\pi\in Until(B,C)$ by Definition~\ref{until}. In the rest of the proof, we assume that $o(w_0)\notin C$. Thus, $o(w_0)\in (A\setminus C)\cap Valid$ by Lemma~\ref{o is valid}. Hence, $o(w_0)\in A'$ by the choice of set $A'$. Then, by Lemma~\ref{A is not last}, sequence $\pi$ must contain at least one more element $w_1$ after element $w_0$. Then, $w_0\to_{s(o(w_0))}w_1$ by Definition~\ref{path}. Hence, $w_0\to_{i_0}w_1$ by the choice of strategy $s$. Then, $w_0\to_{(A',B',C')}w_1$  by the choice of instruction $i_0$. By Definition~\ref{canonical to}, statement $w_0\to_{(A',B',C')}w_1$ implies that one of the following three cases takes place:

\noindent{\em Case I:} $w_0\in A'\sqcup (A'\times I)$ and $w_1\in C'$. Thus, $o(w_0)\in A'$ and $o(w_1)\in C'$ by Definition~\ref{canonical o}. Hence, $o(w_0)\in (A\setminus C)\cap Valid$  and $o(w_1)\in C\cap Valid$ due to the choice of sets $A'$ and $C'$. Thus, $o(w_0)\in B$ by Lemma~\ref{A' is B} and also $o(w_1)\in C$. Therefore, $\pi\in Until(B,C)$ by Definition~\ref{until}.

\noindent{\em Case II:} 
$w_0\in A'\sqcup(A'\times (I\setminus\{i_0\}))$ and 
$w_1\in (A'\cup B')\times\{i_0\}$. Thus, $o(w_0)\in A'$ and $o(w_1)\in A'\cup B'$ by Definition~\ref{canonical o}. Hence, $o(w_0)\in (A\setminus C)\cap Valid$ and $o(w_1)\in ((A\setminus C)\cap Valid)\cup B'$ by the choice of set $A'$. Hence, $o(w_0)\in B$ and $o(w_1)\in B\cup B'$ by Lemma Lemma~\ref{A' is B}. Thus, $o(w_0),o(w_1)\in B$ by the choice of set $B'$.

Recall that $w_0\in A'\sqcup(A'\times (I\setminus\{i_0\}))$, $w_1\in (A'\cup B')\times\{i_0\}$, and $i_0=(A',B',C')$. Thus, by Lemma~\ref{AB is not last}, sequence $\pi$ must contain an element $w_2$ immediately after the element $w_1$ such that $o(w_2)\in C'$.  Hence $o(w_2)\in C$ by the choice of set $C'$. Thus, we have $o(w_0),o(w_1)\in B$ and $o(w_2)\in C$. Therefore, $\pi\in Until(B,C)$ by Definition~\ref{until}.

\noindent{\em Case III:}  $w_0\in (A'\cup B')\times \{i_0\}$ and 
$w_1\in C'$. Thus, $o(w_0)\in A'\cup B'$ and $o(w_1)\in C'$ by Definition~\ref{canonical o}. Hence, $o(w_0)\in (((A\setminus C)\cap Valid))\cup B'$ and $o(w_1)\in C$ by choice of sets $A'$ and $C'$. Thus, $o(w_0)\in B\cup B'$ by Lemma~\ref{A' is B} and also $o(w_1)\in C$. Hence, $o(w_0)\in B$ by the choice of set $B'$. Therefore, $\pi\in Until(B,C)$ by Definition~\ref{until}.
\end{proof}

\subsection{Satisfiability Implies Provability}

The goal of this section is to show the converse of Lemma~\ref{provable implies true}. This result is stated later in the section as Lemma~\ref{true implies provable}. To prove the result, due to Definition~\ref{sat}, it suffices to show that  $X\nvdash E\rhd_F G$ implies that $MaxPath_s(E)\nsubseteq Until(F,G)$ for each strategy $s$. In other words, we need to show that for any strategy $s$ there is a path $\pi\in MaxPath_s(E)$ that either never comes to $G$ or leaves $F$ before coming to $G$. To construct this path, we first define $G_*$ as a set of all starting states from which paths under strategy $s$ unavoidably lead to set $G$ never leaving set $F$. According to Definition~\ref{G*}, set $G_*$ is a union of an infinite chain of sets $G=G_0\subseteq G_1\subseteq G_2\dots$. Sets $\{G_i\}_{i\ge 0}$ are defined recursively below. The same definition also specifies the auxiliary families of sets $\{H_i\}_{i\ge 0}$, $\{A_i\}_{i\ge 1}$, $\{B_i\}_{i\ge 1}$, $\{C_i\}_{i\ge 1}$, $\{A^+_i\}_{i\ge 1}$, and $\{B^+_i\}_{i\ge 1}$ that will be used to state and prove various properties of family $\{G_i\}_{i\ge 0}$.

\begin{definition}\label{Gn} For any sets $F,G\subseteq V$ and any strategy $s$, let 
\begin{enumerate}
    \item\label{1} $G_0=G$ and $H_0=\varnothing$,
    \item choose any instruction $(A_n,B_n,C_n)\in I$ such that
        \begin{enumerate}
            \item\label{a} $A_n\cup B_n\subseteq F\cup G$,
            \item\label{b} $\{a\in A_n\;|\; s(a) = (A_n,B_n,C_n)\}\setminus G_{n-1}$ is not empty,
            \item\label{c} $\{a\in A_n\;|\; s(a) \neq (A_n,B_n,C_n)\}\subseteq G_{n-1}$,
            \item\label{d} $\{b\in B_n\;|\; s(b)\neq (A_n,B_n,C_n)\}\subseteq G_{n-1}$,
            \item\label{e} $C_n\subseteq G_{n-1}$
            
        \end{enumerate}
        and define
        \begin{enumerate}
            \item\label{i} $A_n^+=\{a\in A_n\;|\; s(a) = (A_n,B_n,C_n)\}$,
            \item\label{ii} $B_n^+=\{b\in B_n\;|\; s(b) = (A_n,B_n,C_n)\}$,
            \item\label{iii} $G_n=A^+_n \cup G_{n-1}$,
            \item\label{iv} $H_n=B^+_n \cup H_{n-1}$.
        \end{enumerate}
\end{enumerate}
\end{definition}

Next we state and prove properties of the families of the sets specified in Definition~\ref{Gn}.

\begin{lemma}\label{nm lemma}
$A_n\neq A_m$ for each $n > m$.
\end{lemma}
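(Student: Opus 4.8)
The plan is to argue by contradiction, exploiting the fact that the sets $G_n$ form an increasing chain. First I would record the monotonicity $G_{n-1}\subseteq G_n$, which is immediate from the defining recursion $G_n=A^+_n\cup G_{n-1}$ (item~\ref{iii} of Definition~\ref{Gn}); transitivity then gives $G_i\subseteq G_j$ whenever $i\le j$. The crucial feature to exploit is condition~(\ref{b}): at every step $n$ the set $\{a\in A_n\mid s(a)=(A_n,B_n,C_n)\}\setminus G_{n-1}$ is nonempty, so I may fix a witness $a^\ast\in A_n$ with $s(a^\ast)=(A_n,B_n,C_n)$ and $a^\ast\notin G_{n-1}$. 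Intuitively, each instruction chosen at step $n$ contributes a genuinely fresh element to $G_n$, and this novelty is exactly what forces the first components $A_n$ to differ.

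Now suppose toward a contradiction that $A_m=A_n$ for some $m<n$. Since $a^\ast\in A_n=A_m$, I would split on the value of $s(a^\ast)$ relative to the instruction chosen at the earlier step $m$. If $s(a^\ast)=(A_m,B_m,C_m)$, then $a^\ast\in A^+_m$ by the definition of $A^+_m$ (item~\ref{i}), hence $a^\ast\in G_m$ since $G_m=A^+_m\cup G_{m-1}$; because $m\le n-1$, monotonicity gives $a^\ast\in G_m\subseteq G_{n-1}$, contradicting $a^\ast\notin G_{n-1}$. If instead $s(a^\ast)\neq(A_m,B_m,C_m)$, then $a^\ast\in\{a\in A_m\mid s(a)\neq(A_m,B_m,C_m)\}$, and condition~(\ref{c}) at step $m$ places this set inside $G_{m-1}$; again $m-1\le n-1$ yields $a^\ast\in G_{m-1}\subseteq G_{n-1}$, the same contradiction. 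Either way $A_m=A_n$ is impossible, so $A_n\neq A_m$.

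The only genuinely delicate point is the second case, where the two instructions share the same first component $A$ but may differ in their $B$ or $C$ components. There one cannot conclude that $a^\ast$ lies in $A^+_m$, and the argument must instead lean on condition~(\ref{c}), which guarantees that every element of $A_m$ on which the strategy $s$ disagrees with the instruction $(A_m,B_m,C_m)$ was already absorbed into $G_{m-1}$. This is precisely the subtlety that makes distinctness of the $A_n$'s nontrivial: it is not a formal consequence of the instructions being distinct, but of the invariant that a witness lying outside $G_{n-1}$ cannot have served any earlier step. I would therefore present the case split on $s(a^\ast)$ carefully, since that is where the proof does its real work.
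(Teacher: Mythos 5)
Your proof is correct, and it follows the same overall plan as the paper's: take the witness $a^\ast$ supplied by condition~(\ref{b}), use the monotonicity $G_m\subseteq G_{n-1}$, and derive a contradiction with $a^\ast\notin G_{n-1}$. The difference lies at the one step that matters. The paper's proof deduces $A^+_n=A^+_m$ from $A_n=A_m$ by appeal to item~(\ref{i}) of Definition~\ref{Gn}, but item~(\ref{i}) defines $A^+_n$ in terms of the whole triple $(A_n,B_n,C_n)$, so that identity is not justified when the two instructions agree only in their first components; indeed, if $(A_n,B_n,C_n)\neq(A_m,B_m,C_m)$ then $A^+_n$ and $A^+_m$ are disjoint. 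Your case split on whether $s(a^\ast)=(A_m,B_m,C_m)$ handles exactly this situation: in the first case $a^\ast\in A^+_m\subseteq G_m\subseteq G_{n-1}$, and in the second case condition~(\ref{c}) at step $m$ places $a^\ast$ in $G_{m-1}\subseteq G_{n-1}$; either way you contradict $a^\ast\notin G_{n-1}$. So your argument is slightly longer but more careful than the one in the paper, which as written covers only the case where the full instructions coincide, and your use of condition~(\ref{c}) is precisely what closes the remaining case.
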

\begin{proof}
By item~(\ref{b}) of Definition~\ref{Gn}, there must exist a view $a_0\in  \{a\in A_n\;|\; s(a) = (A_n,B_n,C_n)\}$ such that $a_0\notin G_{n-1}$. Thus, $a_0\in A^+_n\setminus G_{n-1}$ by item~(\ref{i}) of Definition~\ref{Gn}.

Suppose that $A_n=A_m$. Hence, $A^+_n=A^+_m$ by item~(\ref{i}) of Definition~\ref{Gn}. Notice also that $G_{n-1}\supseteq G_{m}$ by item~(\ref{iii}) of Definition~\ref{Gn} and the assumption $n > m$. Then, $ a_0\in A^+_n\setminus G_{n-1}=A^+_m\setminus G_{n-1}\subseteq A^+_m\setminus G_{m}$. Hence, $A^+_m\nsubseteq G_m$, which contradicts item~(\ref{iii}) of Definition~\ref{Gn}.
\end{proof}

\begin{lemma}\label{ah lemma}
Sets $A^+_n$ and $H_{n-1}$ are disjoint.
\end{lemma}
\begin{proof}
Suppose that there is a view $v$ such that $v\in A^+_n$ and $v\in H_{n-1}$. Hence, by items~(\ref{1}) and (\ref{iv}) of Definition~\ref{Gn}, there must exist $m<n$ such that $v\in B^+_m$. Thus, $s(v)=(A_n,B_n,C_n)$ and $s(v)=(A_m,B_m,C_m)$ by items~(\ref{i}) and (\ref{ii}) of Definition~\ref{Gn}, which contradicts~Lemma~\ref{nm lemma}. 
\end{proof}

\begin{lemma}\label{bh lemma}
Sets $B^+_n$ and $H_{n-1}$ are disjoint.
\end{lemma}
\begin{proof}
Suppose that there is a view $v$ such that $v\in B^+_n$ and $v\in H_{n-1}$. Hence, by items (\ref{1}) and (\ref{iv}) of Definition~\ref{Gn}, there must exist $m<n$ such that $v\in B^+_m$. Thus, $s(v)=(A_n,B_n,C_n)$ and $s(v)=(A_m,B_m,C_m)$ by item (\ref{ii}) of Definition~\ref{Gn}, which contradicts~Lemma~\ref{nm lemma}. 
\end{proof}

\begin{lemma}\label{ab gh lemma}
$(A^+_n\cup B^+_n) \cap (G_{n-1}\cup  H_{n-1})\subseteq G_{n-1}$, for each $n\ge 1$.
\end{lemma}
By Lemma~\ref{ah lemma} and Lemma~\ref{bh lemma},
\begin{eqnarray*}
&&(A^+_n\cup B^+_n) \cap (G_{n-1}\cup  H_{n-1}) \\
&&= ((A^+_n\cup B^+_n) \cap G_{n-1}) \cup ( (A^+_n\cup B^+_n) \cap H_{n-1}) \\
&& = ((A^+_n\cup B^+_n) \cap G_{n-1}) \cup (A^+_n \cap H_{n-1}) \cup( B^+_n\cap H_{n-1})\\
&& \subseteq G_{n-1}\cup \varnothing\cup\varnothing=G_{n-1}.
\end{eqnarray*}

\begin{lemma}\label{ggg base lemma}
$X\vdash G_n\rhd_{G_n\cup B^+_n} G_{n-1}$, for each $n\ge 1$.
\end{lemma}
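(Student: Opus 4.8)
The goal is to establish $X\vdash G_n\rhd_{G_n\cup B^+_n} G_{n-1}$. The plan is to start from the defining property of the instruction $(A_n,B_n,C_n)\in I$ and massage it, using the derived lemmas from the Examples section together with the inclusions recorded in Definition~\ref{Gn}, until the subscript and right-hand side take the required form.

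First I would invoke Definition~\ref{canonical I} for the instruction $(A_n,B_n,C_n)$, which gives $X\vdash A_n\rhd_{A_n\cup B_n} C_n$. Since $A^+_n\subseteq A_n$ by item~(\ref{i}) of Definition~\ref{Gn}, Lemma~\ref{remove left} yields $X\vdash A^+_n\rhd_{A_n\cup B_n} C_n$. The aim is to reach a statement about navigating into $G_{n-1}$ rather than into $C_n$; because $C_n\subseteq G_{n-1}$ by item~(\ref{e}), Lemma~\ref{add right} lets me replace the target, giving $X\vdash A^+_n\rhd_{A_n\cup B_n} G_{n-1}$. I would then widen the left-hand side: using the Augmentation axiom with $G_{n-1}$ added on both sides, and noting $G_n = A^+_n\cup G_{n-1}$ by item~(\ref{iii}), I can obtain a formula whose source set is $G_n$ and whose target is $G_{n-1}\cup G_{n-1}=G_{n-1}$, namely something of the shape $X\vdash G_n\rhd_{A_n\cup B_n} G_{n-1}$.

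The remaining work is to convert the subscript $A_n\cup B_n$ into the prescribed $G_n\cup B^+_n$. The key observation is that the views in $A_n\cup B_n$ that are \emph{not} already in $G_n\cup B^+_n$ are exactly the views whose strategy value differs from $(A_n,B_n,C_n)$, and by items~(\ref{c}) and (\ref{d}) of Definition~\ref{Gn} all such views lie in $G_{n-1}\subseteq G_n$. Concretely, $(A_n\cup B_n)\setminus(G_n\cup B^+_n)\subseteq G_{n-1}\subseteq G_n$, so I can first enlarge the subscript from $A_n\cup B_n$ to $A_n\cup B_n\cup G_n$ by Lemma~\ref{add down}, and this enlarged set equals $G_n\cup B^+_n$ together with some views already inside $G_n$. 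I would then shrink it down to $G_n\cup B^+_n$; the cleanest route is to argue that the discarded views can be removed because they either contribute nothing (being in the source/target $G_n\subseteq$ via Reflexivity-type reasoning) or can be peeled off using the Early Bird axiom after first adding $G_{n-1}$ to the right-hand side via Lemma~\ref{add right}, since $G_{n-1}\subseteq G_n$ is already the target.

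The main obstacle I anticipate is precisely this last subscript manipulation: showing rigorously that $A_n\cup B_n$ and $G_n\cup B^+_n$ generate the same provable navigability, given that they differ only on views lying in $G_{n-1}$. The Early Bird axiom (\ref{early bird}) handles removal of subscript views that coincide with the target, so the natural tactic is to first use Lemma~\ref{add right} to inflate the target from $G_{n-1}$ to $G_{n-1}\cup\bigl((A_n\cup B_n)\setminus(G_n\cup B^+_n)\bigr)$, then apply Early Bird to strip those extra views out of the subscript, and finally use Lemma~\ref{add right} once more (or a Reflexivity/Transitivity combination as in Lemma~\ref{remove void}) to restore the target back to $G_{n-1}$. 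Verifying the set-algebra identities $G_n = A^+_n\cup G_{n-1}$ and $(A_n\cup B_n)\setminus(G_n\cup B^+_n)\subseteq G_{n-1}$, and checking that each lemma's hypothesis is met, is routine but is where the care is needed.
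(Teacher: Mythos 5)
Your proposal is correct and follows essentially the same route as the paper's proof: start from $X\vdash A_n\rhd_{A_n\cup B_n}C_n$ (Definition~\ref{canonical I}), pass to target $G_{n-1}$ using $C_n\subseteq G_{n-1}$, restrict the source to $A^+_n$ via Lemma~\ref{remove left}, augment by $G_{n-1}$ to make the source $G_n$, and enlarge the subscript via Lemma~\ref{add down}. The ``main obstacle'' you anticipate is actually vacuous: items~(c), (d), (i), (ii) of Definition~\ref{Gn} give $A_n\setminus A^+_n\subseteq G_{n-1}$ and $B_n\setminus B^+_n\subseteq G_{n-1}$, hence $A_n\cup B_n\subseteq A^+_n\cup B^+_n\cup G_{n-1}=G_n\cup B^+_n$, so a single application of Lemma~\ref{add down} lands exactly on the required subscript and no Early Bird shrinking step is needed.
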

\begin{proof}
Note that $C_n\subseteq G_{n-1}$ by item (\ref{e}) of Definition~\ref{Gn}. Thus, $\vdash C_n\rhd_\varnothing G_{n-1}$ by the Reflexivity axiom. Also, $X\vdash A_n\rhd_{A_n\cup B_n}C_n$ by Definition~\ref{canonical I}. Thus, by the Transitivity axiom, $X\vdash A_n\rhd_{A_n\cup B_n}G_{n-1}$.
Hence, 
$$X\vdash A_n\rhd_{(A_n\setminus A^+_n)\cup A^+_n\cup(B_n\setminus B^+_n)\cup B^+_n}G_{n-1}$$
because $A^+_n\subseteq A_n$ and  $B^+_n\subseteq B_n$ by item~(\ref{i}) and item~(\ref{ii}) of Definition~\ref{Gn}. 
Note that $A_n\setminus A^+_n\subseteq G_{n-1}$ and $B_n\setminus B^+_n\subseteq G_{n-1}$ by items (\ref{c}), (\ref{d}), (\ref{i}) and (\ref{ii}) of Definition~\ref{Gn}. Thus, by Lemma~\ref{add down},
$$X\vdash A_n\rhd_{A^+_n\cup G_{n-1}\cup B^+_n}G_{n-1}.$$
Hence,  by Lemma~\ref{remove left} and due to item (\ref{i}) of Definition~\ref{Gn},
$$X\vdash A^+_n\rhd_{A^+_n\cup G_{n-1}\cup B^+_n}G_{n-1}.$$
Thus,
$X\vdash A^+_n\cup G_{n-1}\rhd_{A^+_n\cup G_{n-1}\cup B^+_n}G_{n-1}$ by the Augmentation axiom.
Then, $X\vdash G_n\rhd_{G_n\cup B^+_n}G_{n-1}$ by by item (\ref{iii}) of Definition~\ref{Gn}.
\end{proof}

\begin{lemma}\label{main lemma}
$X\vdash G_n\rhd_{G_n\cup  H_n} G$
\end{lemma}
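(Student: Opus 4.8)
The statement to prove is $X\vdash G_n\rhd_{G_n\cup H_n} G$. The natural approach is induction on $n$. The base case $n=0$ reduces to $X\vdash G_0\rhd_{G_0\cup H_0} G$, which is $X\vdash G\rhd_{G\cup\varnothing} G$ by item~(\ref{1}) of Definition~\ref{Gn}; this follows immediately from the Reflexivity axiom since $G\subseteq G$.

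For the inductive step, I would assume $X\vdash G_{n-1}\rhd_{G_{n-1}\cup H_{n-1}} G$ and combine it with the single-step navigation result already established, namely Lemma~\ref{ggg base lemma}, which gives $X\vdash G_n\rhd_{G_n\cup B^+_n} G_{n-1}$. The plan is to chain these two facts together using the Transitivity axiom. Concretely, I want to apply transitivity with the first antecedent $G_n\rhd_{G_n\cup B^+_n} G_{n-1}$ and the second antecedent $G_{n-1}\rhd_{G_{n-1}\cup H_{n-1}} G$, yielding a conclusion of the form $G_n\rhd_{(G_n\cup B^+_n)\cup(G_{n-1}\cup H_{n-1})} G$. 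The subscript simplifies: since $G_{n-1}\subseteq G_n$ (by item~(\ref{iii}) of Definition~\ref{Gn}) and $B^+_n\subseteq H_n$, $H_{n-1}\subseteq H_n$ (by item~(\ref{iv})), the union $(G_n\cup B^+_n)\cup(G_{n-1}\cup H_{n-1})$ equals $G_n\cup H_n$, which is exactly the target subscript.

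The main obstacle is the disjointness side condition on the Transitivity axiom: it requires the two subscripts to satisfy $B\cap D=\varnothing$, whereas here $(G_n\cup B^+_n)\cap(G_{n-1}\cup H_{n-1})$ need not be empty. The fix is to invoke the generalized transitivity of Lemma~\ref{super transitivity} instead, which relaxes the side condition to $B\cap D\subseteq C$, where $C$ is the intermediate set $G_{n-1}$. So I must verify $(G_n\cup B^+_n)\cap(G_{n-1}\cup H_{n-1})\subseteq G_{n-1}$. Expanding, $G_n\cap G_{n-1}=G_{n-1}\subseteq G_{n-1}$ and $G_n\cap H_{n-1}$, $B^+_n\cap G_{n-1}$, $B^+_n\cap H_{n-1}$ all need to land inside $G_{n-1}$. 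The terms involving $B^+_n$ are controlled by Lemma~\ref{ab gh lemma}, which states precisely $(A^+_n\cup B^+_n)\cap(G_{n-1}\cup H_{n-1})\subseteq G_{n-1}$; since $G_n=A^+_n\cup G_{n-1}$, one also needs $H_{n-1}\cap A^+_n$ handled, again covered by Lemma~\ref{ah lemma}. Assembling these containments shows the side condition of Lemma~\ref{super transitivity} holds, so applying that lemma and then rewriting the subscript as $G_n\cup H_n$ completes the induction.
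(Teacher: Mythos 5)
Your proposal is correct and follows essentially the same route as the paper: induction on $n$, with the inductive step chaining Lemma~\ref{ggg base lemma} to the induction hypothesis via the generalized transitivity of Lemma~\ref{super transitivity}, using Lemma~\ref{ab gh lemma} (together with Lemmas~\ref{ah lemma} and~\ref{bh lemma}) to discharge the side condition. The only cosmetic difference is that the paper first applies the Early Bird axiom and Lemma~\ref{add down} to shrink the subscript from $G_n\cup B^+_n$ to $A^+_n\cup B^+_n$ so that the side condition is literally Lemma~\ref{ab gh lemma}, whereas you verify the slightly larger containment $(G_n\cup B^+_n)\cap(G_{n-1}\cup H_{n-1})\subseteq G_{n-1}$ directly; both variants are sound.
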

\begin{proof}
We prove this statement by induction on $n$. If $n=0$, then $G_n=G$ by item (\ref{1}) of Definition~\ref{Gn}. Therefore, $\vdash G_n\rhd_{G_n\cup  H_n} G$ by the Reflexivity axiom.

Suppose that $n>0$. Thus, $X\vdash G_n\rhd_{G_n\cup B^+_n} G_{n-1}$ by Lemma~\ref{ggg base lemma}. 
Thus, by the Early Bird axiom, 
$$X\vdash G_n\rhd_{(G_n\setminus G_{n-1})\cup (B^+_n\setminus G_{n-1})} G_{n-1}.$$
Then, by item (\ref{iii}) of Definition~\ref{Gn},
$$X\vdash G_n\rhd_{A^+_n\cup (B^+_n\setminus G_{n-1})} G_{n-1}.$$ 
Hence, by Lemma~\ref{add down},
$$X\vdash G_n\rhd_{A^+_n\cup B^+_n} G_{n-1}.$$
At the same time, by the induction hypothesis,
$$X\vdash G_{n-1}\rhd_{G_{n-1}\cup  H_{n-1}} G.$$
Thus, by Lemma~\ref{super transitivity} taking into account Lemma~\ref{ab gh lemma},
$$X\vdash G_n\rhd_{A^+_n\cup B^+_n\cup G_{n-1}\cup  H_{n-1}} G.$$ 
Therefore, $X\vdash G_n\rhd_{G_{n}\cup  H_{n}} G$ by items (\ref{iii}) and (\ref{iv}) of Definition~\ref{Gn}.
\end{proof}

\begin{definition}\label{G*}
$G_*=\bigcup_{n}G_n$.
\end{definition}

\begin{lemma}\label{*=n}
There is $n\ge 0$ such that $G_*=G_n$.
\end{lemma}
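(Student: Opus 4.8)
The plan is to exploit the finiteness of the set of views $V$, which was explicitly assumed in Section~\ref{syntax section} for exactly this purpose. The construction in Definition~\ref{Gn} is recursive: at stage $n\ge 1$ one chooses an instruction $(A_n,B_n,C_n)\in I$ meeting conditions (a)--(e), and if no such instruction exists the recursion halts. Thus $\{G_n\}$ is defined for indices $0,1,2,\dots$ up to some (a priori possibly infinite) final index, and $G_*=\bigcup_n G_n$ is the union over all defined indices. To prove the lemma it suffices to show that only finitely many $G_n$ are ever produced, because an increasing chain of finitely many sets has its union equal to its largest member.

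First I would record that the chain is nondecreasing: by item~(\ref{iii}) of Definition~\ref{Gn}, $G_n=A^+_n\cup G_{n-1}\supseteq G_{n-1}$ for every $n\ge 1$. Next I would show that whenever stage $n$ actually produces an instruction, the inclusion is \emph{strict}. Indeed, item~(\ref{b}) of Definition~\ref{Gn} guarantees that $\{a\in A_n\mid s(a)=(A_n,B_n,C_n)\}\setminus G_{n-1}$ is nonempty, and by item~(\ref{i}) this set equals $A^+_n\setminus G_{n-1}$. Hence $A^+_n\not\subseteq G_{n-1}$, so by item~(\ref{iii}) there is a view in $G_n$ that is absent from $G_{n-1}$, giving $G_{n-1}\subsetneq G_n$ and therefore $|G_{n-1}|<|G_n|$.

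With strict growth established, the finiteness argument closes the proof. Since every $G_n\subseteq V$ and $V$ is finite, we have $|G_n|\le|V|$ for all defined $n$. The cardinalities $|G_0|<|G_1|<\dots$ strictly increase at each successful stage, so at most $|V|$ stages can succeed; in particular the recursion halts at some finite index $n$. For that $n$ the chain $G_0\subseteq\dots\subseteq G_n$ is its entire history, so $G_*=\bigcup_m G_m=G_n$, as required.

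The argument is essentially a bounded-monotone-sequence (pigeonhole) observation, so there is no deep obstacle. The only point demanding care is the bookkeeping that links the emptiness hypothesis of item~(\ref{b}) to a genuine strict increase in cardinality via items~(\ref{i}) and~(\ref{iii}); once that single inequality $|G_{n-1}|<|G_n|$ is pinned down, boundedness by $|V|$ forces termination immediately.
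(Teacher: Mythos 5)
Your proof is correct and takes the same route as the paper: the chain $G_0\subseteq G_1\subseteq\dots\subseteq G_*\subseteq V$ together with the finiteness of $V$ assumed in Section~\ref{syntax section} forces stabilization. The paper's own proof is just a terser version of this; your extra observation that item~(\ref{b}) makes each inclusion strict is a nice explicit justification of termination that the paper only spells out later, inside the proof of Lemma~\ref{true implies provable}.
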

\begin{proof}
By Definition~\ref{Gn} and Definition~\ref{G*}, we have 
$G_0\subseteq G_1\subseteq G_2\dots \subseteq G_*\subseteq V$. Thus, the statement of the lemma follows from the assumption in Section~\ref{syntax section} that set $V$ is finite.
\end{proof}

\begin{lemma}\label{to FG}
$G_n\cup H_n\subseteq F\cup G$.
\end{lemma}
\begin{proof}
Consider any $n\ge 0$. Note that $A_n\cup B_n\subseteq F\cup G$ by line~(\ref{a}) of Definition~\ref{Gn}. Thus, $A^+_n\cup B^+_n\subseteq F\cup G$ by line~(\ref{i}) and line~(\ref{ii}) of Definition~\ref{Gn}. Hence, $A^+_n\cup B^+_n\subseteq F\cup G$ for all $n\ge 0$. Then, $G_n\cup H_n\subseteq F\cup G$ for all $n\ge 0$ by by line~(\ref{a}), line~(\ref{iii}), and line~(\ref{iv}) of Definition~\ref{Gn}.
\end{proof}

We are now ready to state and prove the main lemma of this section. The statement of this lemma is the contrapositive of Lemma~\ref{provable implies true}.

\begin{lemma}\label{true implies provable}
If $T(X)\vDash E\rhd_F G$, then $X\vdash E\rhd_F G$.
\end{lemma}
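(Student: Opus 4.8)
The plan is to prove the contrapositive: assuming $X\nvdash E\rhd_F G$, I would construct a strategy $s$ and a path $\pi\in MaxPath_s(E)$ witnessing $MaxPath_s(E)\nsubseteq Until(F,G)$, which by Definition~\ref{sat} gives $T(X)\nvDash E\rhd_F G$. The whole machinery of Definition~\ref{Gn} and the set $G_*$ from Definition~\ref{G*} is built for exactly this purpose: $G_*$ is intended to be the set of views from which \emph{every} strategy-compatible path is forced into $G$ while staying inside $F$. So the heart of the argument should be to show that if $E\subseteq G_*$ then the relevant navigability formula is in fact provable, contradicting the assumption; hence $E\nsubseteq G_*$, and I can start a path at a view in $E\setminus G_*$ that escapes.

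First I would fix an arbitrary strategy $s$ and run Definition~\ref{Gn} relative to this $s$, obtaining the chain $G=G_0\subseteq G_1\subseteq\cdots$ stabilizing at some $G_n=G_*$ by Lemma~\ref{*=n}. The key provability fact is Lemma~\ref{main lemma}, which gives $X\vdash G_*\rhd_{G_*\cup H_n}G$, and Lemma~\ref{to FG}, which gives $G_n\cup H_n\subseteq F\cup G$. From these I would derive $X\vdash G_*\rhd_F G$: using Lemma~\ref{add down} to enlarge the subscript from $G_*\cup H_n$ up to $F\cup G$, and then removing the extraneous $G$ from the subscript. Indeed, since the target is $G$, the Early Bird axiom lets me replace the subscript $F\cup G$ by $(F\cup G)\setminus G\subseteq F$, after which Lemma~\ref{add down} pushes it back up to $F$, yielding $X\vdash G_*\rhd_F G$.

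Now I would argue $E\nsubseteq G_*$. If instead $E\subseteq G_*$, then by Lemma~\ref{remove left} the provable statement $X\vdash G_*\rhd_F G$ would give $X\vdash E\rhd_F G$, contradicting the assumption. So pick a view $e_0\in E\setminus G_*$. The construction of $G_*$ is designed so that the maximality condition on the instruction chain guarantees no admissible instruction $(A,B,C)$ exists that would pull $e_0$ into $G_*$; concretely, the conditions \ref{b}--\ref{e} of Definition~\ref{Gn} can no longer be satisfied at the fixed point $G_n=G_*$. From this I would build a maximal path $\pi$ starting at $e_0$: at each step, following $s$, the absence of a ``capturing'' instruction means the path either stays outside $F\cup G$ at some point (so it leaves $F$ before reaching $G$) or cycles without ever entering $G$, so $\pi\notin Until(F,G)$.

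The main obstacle I anticipate is the last step: turning the fixed-point / maximality property of $G_*$ into an actual escaping path in $MaxPath_s(E)$. This requires a careful case analysis on the transition structure of Definition~\ref{canonical to}, using Lemma~\ref{A is not last} and Lemma~\ref{AB is not last} to control when the path must continue and through which views, and using the failure of conditions \ref{b}--\ref{e} at $G_*$ to guarantee that from any state whose observation lies in $E\setminus G_*$ the strategy $s$ cannot force entry into $G$ while remaining in $F$. Showing that such a nonterminating-or-escaping path genuinely exists (and handling the second-type partial-completion states so the path is truly maximal) is the delicate part; the provability side, by contrast, is essentially already packaged in Lemma~\ref{main lemma} and Lemma~\ref{to FG}.
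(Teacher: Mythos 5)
Your overall architecture is the paper's: the fixed point $G_*$ of Definition~\ref{Gn}, the provability of $X\vdash G_*\rhd_F G$ via Lemma~\ref{main lemma}, Lemma~\ref{to FG}, Lemma~\ref{add down} and the Early Bird axiom, and the dichotomy between $E$ being absorbed into $G_*$ or not (the paper argues directly from $T(X)\vDash E\rhd_F G$ and shows the second case is impossible, rather than contrapositively, but that is only a reorganization). There are, however, two genuine gaps. The first is your case split on $E\subseteq G_*$ versus $e_0\in E\setminus G_*$. The correct split is on $E\cap Valid\subseteq G_*$: a view in $E\setminus Valid$ has no state of the canonical system observing it, so no path can start there and your escape argument cannot get off the ground at such a view. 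Correspondingly, in the first case your shortcut via Lemma~\ref{remove left} only yields $(E\cap Valid)\rhd_F G$; you still need Lemma~\ref{little induction} together with the Augmentation and Transitivity axioms to absorb $E\setminus Valid$ and reach $X\vdash E\rhd_F G$, as the paper does.

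The second and more serious gap is that the step you defer as ``the delicate part'' is the actual content of the proof, and no mechanism for it is supplied. The paper does not directly exhibit an escaping path by analyzing which of conditions (\ref{b})--(\ref{e}) fails. Instead it restricts attention to the set $W=(V\setminus G_*)\cup\{(w,i)\;|\;w\notin G_*,\ s(o(w))\neq i\}$ --- note the deliberate exclusion of exactly those second-type states whose pending instruction the strategy would complete --- and takes a maximal $s$-compatible sequence $\pi$ inside $W$ starting at $e\in(E\cap Valid)\setminus G_*$. If $\pi$ is infinite it already witnesses $MaxPath_s(E)\nsubseteq Until(F,G)$. If $\pi$ is finite with last element $w_k$, one proves that the instruction $(A,B,C)=s(o(w_k))$ satisfies all of conditions (\ref{a})--(\ref{e}) of Definition~\ref{Gn}, so the chain $G_0\subseteq G_1\subseteq\cdots$ could be extended past its fixed point, a contradiction. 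Several of these verifications (the finiteness of $\pi$, $o(w_k)\in A$, and $A\cup B\subseteq F\cup G$) themselves rely on the assumed inclusion $MaxPath_s(E)\subseteq Until(F,G)$, which is why the argument is naturally run as a per-strategy reductio rather than as a direct construction of an escape. Without identifying the set $W$, the treatment of the partial-completion states inside it, and the fact that the contradiction comes from extending the chain rather than from a direct case analysis of the transition relation, your proposal remains a plan rather than a proof.
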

\begin{proof}
Suppose that $T(X)\vDash E\rhd_F G$. Thus, by Definition~\ref{sat}, there is a strategy $s$ such that $MaxPath_s(E)\subseteq Until(F,G)$.

Consider chain of sets $G_0\subseteq G_1\subseteq G_2\subseteq \dots$ and set $G_*$, as specified in Definition~\ref{Gn} and Definition~\ref{G*}, constructed based on sets $F$ and $G$ as well as strategy $s$. We consider the following two cases separately:

\noindent{\em Case I:} $E\cap Valid\subseteq G_*$. Thus, by the Reflexivity axiom 
\begin{equation}\label{case 1 eq}
    \vdash (E\cap Valid)\rhd_\varnothing G_*.
\end{equation}
At the same time, $X\vdash (E\setminus Valid)\rhd_\varnothing\varnothing$ by Lemma~\ref{little induction}. Hence, 
$X\vdash (E\setminus Valid)\cup (E\cap Valid)\rhd_\varnothing(E\cap Valid)$ by the Augmentation axiom. In other words,
$X\vdash E\rhd_\varnothing(E\cap Valid)$. This, together with statement~(\ref{case 1 eq}) by the Transitivity axiom implies that  $X\vdash E\rhd_\varnothing G_*$.
Thus, by Lemma~\ref{*=n}, there is $n\ge 0$ such that $X\vdash E\rhd_\varnothing G_n$.  Hence, $X\vdash E\rhd_{G_n\cup H_n}G$ by Lemma~\ref{main lemma} and the Transitivity axiom. Hence, $X\vdash E\rhd_{F\cup G}G$ by Lemma~\ref{to FG} and Lemma~\ref{add down}. Thus, $X\vdash E\rhd_{(F\cup G)\setminus G}G$ by the Early Bird axiom. Note that $(F\cup G)\setminus G\subseteq F$. Therefore, $X\vdash E\rhd_{F}G$ by Lemma~\ref{add down}.

\noindent{\em Case II:} there is $e\in (E\cap Valid)\setminus G_*$. Let 
$$
W=(V\setminus G_*)\cup\{ (w,i)\in (V\setminus G_*)\times I\;|\;s(o(w))\neq i\}.
$$
Let $\pi$ be a maximal (either finite or infinite) sequence $w_0, w_1, \dots$ of elements from set $W$ such that
\begin{enumerate}
    \item $w_0=e$,
    \item $w_i\to_{s(o(w_i))} w_{i+1}$ for all $i\ge 0$.\label{item 2}
\end{enumerate}
\begin{claim}
Sequence $\pi$ is finite.
\end{claim}

\noindent{\em Proof of Claim.}
If sequence $\pi$ is infinite then $\pi \in MaxPath_s(E)$ by Definition~\ref{maxpath}, $o(w_0)=o(e)\in (E\cap Valid)\setminus G_* \subseteq E$, and item (\ref{item 2}) above. At the same time $\pi\notin Until(F,G)$ by Definition~\ref{until} because $o(w_i)\in o(W)\subseteq V\setminus G_* \subseteq V\setminus G_0=V\setminus G$ for each $i\ge 0$. Thus, $MaxPath_s(E)\nsubseteq Until(F,G)$, which is a contradiction with the choice of strategy $s$. \qed

 Let $w_k$ be the last element of sequence $\pi$ and $(A,B,C)=s(o(w_k))$. By $pr_1$ and $pr_2$ we mean the first and the second projection of a pair.

\begin{claim}\label{claim 1.5}
If $w_k\in V\times I$, then $pr_2(w_k)\neq (A,B,C)$.
\end{claim} 
\noindent{\em Proof of Claim.}
Suppose that $pr_2(w)=(A,B,C)$. Then, by the choice of instruction $(A,B,C)$, we have  $pr_2(w_k)=s(o(w_k))$. Thus, $w_k\notin W$ by the choice of set $W$, which is a contradiction with the choice of sequence $\pi$. \qed

\begin{claim}\label{claim A}
$o(w_k)\in A$.
\end{claim}

\noindent{\em Proof of Claim.}
Suppose $o(w_k)\notin A$. First we show $\pi\in MaxPath_s(E)$. Assume $\pi\notin MaxPath_s(E)$. Thus, by Definition~\ref{maxpath} and because $o(w_0)\in E$, there must exist state $w_{k+1}\in S$ such that $w_k\to_{s(o(w_k))} w_{k+1}$. Hence, $w_k\to_{(A,B,C)} w_{k+1}$ by the choice of the instruction $(A,B,C)$. Thus, by Definition~\ref{canonical to}, the assumption $o(w_k) \notin A$ implies that $w_k\in (A\cup B)\times\{(A,B,C)\}$, which is a contradiction with Claim~\ref{claim 1.5}. Therefore, $\pi\in MaxPath_s(E)$.

Recall that $MaxPath_s(E)\subseteq Until(F,G)$ by the choice of strategy $s$. Hence, $\pi\in Until(F,G)$. Thus, by Definition~\ref{until}, there is $m\ge 0$ such that $o(w_m)\in G$. Hence, $o(w_m)\in G_0$ by Definition~\ref{Gn}. Thus, $o(w_m)\in G_*$ by Definition~\ref{G*}. Therefore, $w_m\notin W$ by the choice of $W$ and Definition~\ref{canonical o}, which is a contradiction with the choice of sequence $\pi$. \qed

\begin{claim}\label{claim CG}
$C\subseteq G_*$.
\end{claim} 
\noindent{\em Proof of Claim.}
Suppose that there is $c\in C$ such that $c\notin G_*$. Note that $o(w_k)\in A$ by Claim~\ref{claim A}. Thus, $w_k\to_{(A,B,C)}c$ by Definition~\ref{canonical to} and the assumption $c\in C$. At the same time, the assumption $c\notin G_*$ implies $c\in V\setminus G_*$. Which implies that $c\in W$ by the choice of set $W$. Hence, sequence $\pi$ can be extended by at least one more element, namely by state $c$, which is a contradiction with the choice of sequence $\pi$. \qed

\begin{claim}\label{ABFG claim}
$A\cup B\subseteq F\cup G$.
\end{claim} 
\noindent{\em Proof of Claim.}
Suppose that there is $x\in (A\cup B)\setminus (F\cup G)$. Recall that $o(w_k)\in A$. Thus, $w_k\in A\sqcup (A\times (I\setminus\{(A,B,C)\}))$ by Definition~\ref{canonical o} and Claim~\ref{claim 1.5}. Thus, $w_k\to_{(A,B,C)}(x,(A,B,C))$ by Definition~\ref{canonical to} and because $x\in A\cup B$. Let $\pi'=\pi,(x,(A,B,C))$. In other words, $\pi'$ is the extension of sequence $\pi$ by an additional element $(x,(A,B,C))$. Note that $\pi'\in Path_s(E)$ by the choice of sequence $\pi$ and because $w_k\to_{(A,B,C)}(x,(A,B,C))$. By Lemma~\ref{maxpath exists}, sequence $\pi'$ can be extended to a sequence $\pi''\in MaxPath_s(E)$. Thus, $\pi''\in Until(F,G)$ by the choice of strategy $s$.

At the same time, $w_1,\dots,w_k\in W$ by the choice of sequence $\pi$. Thus, we have $o(w_1),\dots,o(w_k)\notin G_*$ by the choice of set $W$. Then, $o(w_1),\dots,o(w_k)\notin G_0$ by Definition~\ref{G*}. Hence, $o(w_1),\dots,o(w_k)\notin G$ by Definition~\ref{Gn}. Recall that $x\in (A\cup B)\setminus (F\cup G)$. Thus, $o(x,(A,B,C))\notin F\cup G$. Then, $o(x,(A,B,C))\notin F$ and $o(w_1),\dots,o(w_k),o(x,(A,B,C))\notin G$.
Therefore, $\pi''\notin Until(F,G)$ by Definition~\ref{until}, which is a contradiction with the above observation $\pi''\in Until(F,G)$.  \qed

\begin{claim}\label{claim ABG}
$\{x\in A\cup B\;|\; s(x)\neq (A,B,C)\}\subseteq G_*$.
\end{claim}
\noindent{\em Proof of Claim.} 
Let there be $x\in A\cup B$ such that $s(x)\neq (A,B,C)$ and $x\notin G_*$. Recall that $o(w_k)\in A$. Thus, $w_k\in A\sqcup (A\times (I\setminus\{(A,B,C)\}))$ by Definition~\ref{canonical o} and Claim~\ref{claim 1.5}. Thus, $w_k\to_{(A,B,C)}(x,(A,B,C))$ by Definition~\ref{canonical to} and because $x\in A\cup B$.  

At the same time, $o(x,(A,B,C))=x\notin G_*$ by Definition~\ref{canonical o} and the assumption $x\notin G_*$. Hence, $(x,(A,B,C))\in W$ by the assumption $s(x)\neq (A,B,C)$ and the choice of $W$.

Therefore, sequence $\pi$ can be extended by at least one more element, namely by state $(x,(A,B,C))$, which is a contradiction with the choice of sequence $\pi$. \qed 

We are now ready to finish the proof of the lemma. Note that set $G_{n}\setminus G_{n-1}$ is not empty for each $n\ge 0$ by item (\ref{b}) of Definition~\ref{Gn}. Thus, the recursive construction of chain $G_0\subseteq G_1\subseteq G_2\dots$, as given in Definition~\ref{Gn}, must terminate due to set $V$ being finite.
Suppose that  the last element of the chain $G_0\subseteq G_1\subseteq G_2\dots$ is set $G_{k-1}$. To come to a contradiction, it suffices to show that at least one more set can be added to the chain $G_0\subseteq G_1\subseteq G_2\dots$ by choosing instruction $(A_n,B_n,C_n)$ to be $(A,B,C)$. To prove the latter, we need to show that instruction $(A,B,C)$ satisfies conditions (\ref{a}) through (\ref{e}) of Definition~\ref{Gn}. Indeed, condition (\ref{a}) is satisfied by Claim~\ref{ABFG claim}. Condition~(\ref{b}) is satisfied because $s(o(w_k))\in A$ by Claim~\ref{claim A} and $o(w_k)\notin G_{k-1}=G_*$ because $w_k\in W$ by Claim~\ref{claim 1.5}. Conditions (\ref{c}) and (\ref{d}) are satisfied by Claim~\ref{claim ABG}. Finally, condition (\ref{e}) is satisfied by Claim~\ref{claim CG}.
This concludes the proof of the lemma.
\end{proof}

\subsection{Completeness: The Final Steps}

In this section we use Lemma~\ref{provable implies true} and Lemma~\ref{true implies provable} to finish the proof of the completeness theorem. The completeness theorem itself is stated below as Theorem~\ref{completeness}.

\begin{lemma}\label{almost there}
$T(X)\vDash \phi$ if and only if $\phi\in X$.
\end{lemma}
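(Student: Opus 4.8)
The plan is to prove the biconditional $T(X)\vDash \phi \iff \phi \in X$ by induction on the structure of the formula $\phi$. Since the language $\Phi$ is built from atomic formulas $A\rhd_B C$ using negation and implication (Definition~1), there are three cases to handle.

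For the base case, $\phi$ is an atomic formula $A\rhd_B C$. Here I would combine the two key lemmas already established. Lemma~\ref{provable implies true} gives that $X\vdash A\rhd_B C$ implies $T(X)\vDash A\rhd_B C$, and Lemma~\ref{true implies provable} gives the converse $T(X)\vDash A\rhd_B C$ implies $X\vdash A\rhd_B C$. Together these yield $T(X)\vDash A\rhd_B C$ if and only if $X\vdash A\rhd_B C$. To close the base case I then invoke maximality and consistency of $X$: since $X$ is a maximal consistent set, $X\vdash A\rhd_B C$ if and only if $A\rhd_B C\in X$. Indeed, if the formula is provable from $X$ it must belong to $X$ (otherwise its negation would be in $X$ by maximality, contradicting consistency), and conversely membership trivially implies provability.

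For the inductive steps, I would use the standard properties of maximal consistent sets together with the satisfiability clauses of Definition~\ref{sat}. For negation: $T(X)\vDash \neg\psi$ iff $T(X)\nvDash \psi$ (by Definition~\ref{sat}) iff $\psi\notin X$ (by the induction hypothesis) iff $\neg\psi\in X$ (since $X$ is maximal consistent, exactly one of $\psi$ and $\neg\psi$ lies in $X$). For implication $\psi\to\chi$: $T(X)\vDash \psi\to\chi$ iff $T(X)\nvDash\psi$ or $T(X)\vDash\chi$ (by Definition~\ref{sat}) iff $\psi\notin X$ or $\chi\in X$ (by the induction hypothesis), and this is equivalent to $\psi\to\chi\in X$ by the usual reasoning about implications in maximal consistent sets.

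The only genuine work is in the base case, and that work is entirely delegated to Lemma~\ref{provable implies true} and Lemma~\ref{true implies provable}, whose proofs carry the real technical weight of the completeness argument. The inductive steps are routine applications of the defining properties of maximal consistent sets. I therefore do not anticipate any serious obstacle in this lemma itself; it is the bridge that packages the two directions already proven into a single clean equivalence that extends to all formulas. The subsequent completeness theorem then follows immediately: if $\nvdash\phi$ then $\neg\phi$ is consistent and extends to a maximal consistent set $X$ with $\neg\phi\in X$, whence $T(X)\nvDash\phi$ by this lemma.
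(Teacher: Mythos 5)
Your proposal matches the paper's proof: both proceed by induction on the structure of $\phi$, with the base case handled by Lemma~\ref{provable implies true} and Lemma~\ref{true implies provable} (plus the standard equivalence of $X\vdash\psi$ and $\psi\in X$ for maximal consistent $X$), and the negation and implication cases handled by the standard properties of maximal consistent sets. You have simply spelled out the routine steps that the paper leaves implicit.
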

\begin{proof}
Induction on the structural complexity of formula $\phi$. In the base case the statement of the lemma follows from Lemma~\ref{provable implies true} and Lemma~\ref{true implies provable}. The induction step follows from the maximality and the consistency of set $X$ in the standard way.
\end{proof}

\begin{theorem}\label{completeness}
If $T\vDash\phi$ for every epistemic transition system $T$, then $\vdash\phi$.
\end{theorem}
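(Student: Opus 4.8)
The plan is to prove the completeness theorem by contraposition, reducing the claim about all transition systems to a statement about a single canonical system built from a maximal consistent set. Suppose $\nvdash\phi$. Then the set $\{\neg\phi\}$ is consistent, so by the standard Lindenbaum construction it extends to a maximal consistent set $X$ containing $\neg\phi$. The goal is to exhibit one epistemic transition system that refutes $\phi$, and the natural candidate is the canonical system $T(X)$ constructed in this section from $X$.

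The key bridge is Lemma~\ref{almost there}, which asserts $T(X)\vDash\psi$ if and only if $\psi\in X$ for every formula $\psi$. First I would invoke this lemma with $\psi=\phi$: since $\neg\phi\in X$ and $X$ is consistent, we have $\phi\notin X$, and therefore $T(X)\nvDash\phi$. This immediately contradicts the hypothesis that $T\vDash\phi$ for \emph{every} epistemic transition system $T$, since $T(X)$ is such a system. Hence the assumption $\nvdash\phi$ is untenable, which yields $\vdash\phi$.

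The genuine content of the theorem has already been discharged in the preceding subsections, so the final step is essentially bookkeeping. The base case of Lemma~\ref{almost there} rests on the two directions established earlier: Lemma~\ref{provable implies true} gives $X\vdash A\rhd_B C\Rightarrow T(X)\vDash A\rhd_B C$, and Lemma~\ref{true implies provable} gives the converse; combined with the fact that $X\vdash A\rhd_B C$ iff $A\rhd_B C\in X$ (by maximality and consistency of $X$), this settles atomic formulae. The Boolean connectives are handled by the usual induction using the truth conditions of Definition~\ref{sat} together with the closure properties of maximal consistent sets. I do not anticipate any real obstacle here: the only point requiring care is the correct appeal to Lindenbaum's lemma to obtain $X$ from the consistent set $\{\neg\phi\}$, which is routine, and the observation that $T(X)$ is a legitimate epistemic transition system in the sense of Definition~\ref{transition system}, which was verified during its construction.
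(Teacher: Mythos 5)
Your proof is correct and follows essentially the same route as the paper: assume $\nvdash\phi$, extend $\{\neg\phi\}$ to a maximal consistent set $X$ by Lindenbaum, and use Lemma~\ref{almost there} to conclude $T(X)\nvDash\phi$, contradicting the hypothesis. The only cosmetic difference is that you apply the lemma to $\phi$ (via $\phi\notin X$) while the paper applies it to $\neg\phi$; the substance is identical.
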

\begin{proof}
Suppose $\nvdash\phi$. Let $X$ be a maximal consistent set containing formula $\neg\phi$. Thus, $T(X)\vDash\neg\phi$ by Lemma~\ref{almost there}. Therefore, $T(X)\nvDash\phi$.
\end{proof}

\section{Conclusion}\label{conclusion section}

The main informal contribution of this article is the observation that unlike its unrestricted counterpart, the restricted navigability relation is transitive not only for recall strategies but also for amnesic strategies. The main technical result is the completeness theorem for a logical system capturing the properties of the restricted navigability. Our setting is significantly different from the one in an earlier work by Li and Wang~\cite{lw17icla}, where a navigation strategy is defined  not as a function on views but as a fixed sequence of instructions.  As a result, the logical system that we propose is also significantly different from the one introduced by Li and Wang.

\bibliographystyle{elsarticle-num}
\bibliography{sp} 

\begin{thebibliography}{10}
\expandafter\ifx\csname url\endcsname\relax
  \def\url#1{\texttt{#1}}\fi
\expandafter\ifx\csname urlprefix\endcsname\relax\def\urlprefix{URL }\fi
\expandafter\ifx\csname href\endcsname\relax
  \def\href#1#2{#2} \def\path#1{#1}\fi

\bibitem{dn17arxiv-armstrong}
K.~Deuser, P.~Naumov, Armstrong's axioms and navigation strategies,
  arXiv:1707.04106.

\bibitem{guw09}
H.~Garcia-Molina, J.~Ullman, J.~Widom, Database Systems: The Complete Book, 2nd
  Edition, Prentice-Hall, 2009.

\bibitem{a74}
W.~W. Armstrong, Dependency structures of data base relationships, in:
  Information Processing 74 ({P}roc. {IFIP} {C}ongress, {S}tockholm, 1974),
  North-Holland, Amsterdam, 1974, pp. 580--583.

\bibitem{p01illc}
M.~Pauly, Logic for social software, Ph.D. thesis, Institute for Logic,
  Language, and Computation (2001).

\bibitem{p02}
M.~Pauly, A modal logic for coalitional power in games, Journal of Logic and
  Computation 12~(1) (2002) 149--166.
\newblock \href {http://dx.doi.org/10.1093/logcom/12.1.149}
  {\path{doi:10.1093/logcom/12.1.149}}.

\bibitem{g01tark}
V.~Goranko, Coalition games and alternating temporal logics, in: Proceedings of
  the 8th conference on Theoretical aspects of rationality and knowledge,
  Morgan Kaufmann Publishers Inc., 2001, pp. 259--272.

\bibitem{vw05ai}
W.~van~der Hoek, M.~Wooldridge, On the logic of cooperation and propositional
  control, Artificial Intelligence 164~(1) (2005) 81 -- 119.

\bibitem{b07ijcai}
S.~Borgo, Coalitions in action logic, in: 20th International Joint Conference
  on Artificial Intelligence, 2007, pp. 1822--1827.

\bibitem{sgvw06aamas}
L.~Sauro, J.~Gerbrandy, W.~van~der Hoek, M.~Wooldridge, Reasoning about action
  and cooperation, in: Proceedings of the Fifth International Joint Conference
  on Autonomous Agents and Multiagent Systems, AAMAS '06, ACM, New York, NY,
  USA, 2006, pp. 185--192.
\newblock \href {http://dx.doi.org/10.1145/1160633.1160663}
  {\path{doi:10.1145/1160633.1160663}}.

\bibitem{abvs10jal}
T.~{\AA}gotnes, P.~Balbiani, H.~van Ditmarsch, P.~Seban, Group announcement
  logic, Journal of Applied Logic 8~(1) (2010) 62 -- 81.
\newblock \href {http://dx.doi.org/10.1016/j.jal.2008.12.002}
  {\path{doi:10.1016/j.jal.2008.12.002}}.

\bibitem{avw09ai}
T.~{\AA}gotnes, W.~van~der Hoek, M.~Wooldridge, Reasoning about coalitional
  games, Artificial Intelligence 173~(1) (2009) 45 -- 79.
\newblock \href {http://dx.doi.org/10.1016/j.artint.2008.08.004}
  {\path{doi:10.1016/j.artint.2008.08.004}}.

\bibitem{b14sr}
F.~Belardinelli, Reasoning about knowledge and strategies: Epistemic strategy
  logic, in: Proceedings 2nd International Workshop on Strategic Reasoning,
  {SR} 2014, Grenoble, France, April 5-6, 2014, Vol. 146 of {EPTCS}, 2014, pp.
  27--33.

\bibitem{mn12tocl}
S.~M. More, P.~Naumov, Calculus of cooperation and game-based reasoning about
  protocol privacy, ACM Trans. Comput. Logic 13~(3) (2012) 22:1--22:21.
\newblock \href {http://dx.doi.org/10.1145/2287718.2287722}
  {\path{doi:10.1145/2287718.2287722}}.

\bibitem{ahk02}
R.~Alur, T.~A. Henzinger, O.~Kupferman, Alternating-time temporal logic,
  Journal of the ACM 49~(5) (2002) 672--713.
\newblock \href {http://dx.doi.org/10.1145/585265.585270}
  {\path{doi:10.1145/585265.585270}}.

\bibitem{vw03sl}
W.~van~der Hoek, M.~Wooldridge, Cooperation, knowledge, and time:
  Alternating-time temporal epistemic logic and its applications, Studia Logica
  75~(1) (2003) 125--157.
\newblock \href {http://dx.doi.org/10.1023/A:1026171312755}
  {\path{doi:10.1023/A:1026171312755}}.

\bibitem{aa12aamas}
T.~{\AA}gotnes, N.~Alechina, Epistemic coalition logic: completeness and
  complexity, in: Proceedings of the 11th International Conference on
  Autonomous Agents and Multiagent Systems-Volume 2 (AAMAS), International
  Foundation for Autonomous Agents and Multiagent Systems, 2012, pp.
  1099--1106.

\bibitem{ja07jancl}
W.~Jamroga, T.~{\AA}gotnes, Constructive knowledge: what agents can achieve
  under imperfect information, Journal of Applied Non-Classical Logics 17~(4)
  (2007) 423--475.
\newblock \href {http://dx.doi.org/10.3166/jancl.17.423-475}
  {\path{doi:10.3166/jancl.17.423-475}}.

\bibitem{jv04fm}
W.~Jamroga, W.~van~der Hoek, Agents that know how to play, Fundamenta
  Informaticae 63~(2-3) (2004) 185--219.

\bibitem{v01ber}
J.~van Benthem, Games in dynamic-epistemic logic, Bulletin of Economic Research
  53~(4) (2001) 219--248.
\newblock \href {http://dx.doi.org/10.1111/1467-8586.00133}
  {\path{doi:10.1111/1467-8586.00133}}.

\bibitem{nt17aamas}
P.~Naumov, J.~Tao, Coalition power in epistemic transition systems, in:
  Proceedings of the 2017 International Conference on Autonomous Agents and
  Multiagent Systems (AAMAS), 2017, pp. 723--731.

\bibitem{aa16jlc}
T.~{\AA}gotnes, N.~Alechina, Coalition logic with individual, distributed and
  common knowledge, Journal of Logic and ComputationExv085.
\newblock \href {http://dx.doi.org/10.1093/logcom/exv085}
  {\path{doi:10.1093/logcom/exv085}}.

\bibitem{fhlw17ijcai}
R.~Fervari, A.~Herzig, Y.~Li, Y.~Wang, Strategically knowing how, in: 26th
  International Joint Conference on Artificial Intelligence (IJCAI), 2017, (to
  appear).

\bibitem{nt17tark}
P.~Naumov, J.~Tao, Together we know how to achieve: An epistemic logic of
  know-how, in: 16th conference on Theoretical Aspects of Rationality and
  Knowledge (TARK), July 24-26, 2017, 2017, (to appear).

\bibitem{nt17arxiv-perfect}
P.~Naumov, J.~Tao, Strategic coalitions with perfect recall, arXiv:1707.04298.

\bibitem{w15lori}
Y.~Wang, A logic of knowing how, in: Logic, Rationality, and Interaction,
  Springer, 2015, pp. 392--405.

\bibitem{w17synthese}
Y.~Wang, A logic of goal-directed knowing how, Synthese(to appear).

\bibitem{lw17icla}
Y.~Li, Y.~Wang, Achieving while maintaining: A logic of knowing how with
  intermediate constraints, in: Logic and Its Applications: Proceedings of 7th
  Indian Conference, ICLA 2017, Kanpur, India, January 5-7, 2017, Springer,
  2017, pp. 154--167.

\end{thebibliography}

\end{document}